\newtheorem{proposition}{Proposition}
\newcommand\figcaption{\def\@captype{figure}\caption}
\newcommand\tabcaption{\def\@captype{table}\caption}
\def\tsc#1{\csdef{#1}{\textsc{\lowercase{#1}}\xspace}}
\begin{document}
\let\WriteBookmarks\relax
\def\floatpagepagefraction{1}
\def\textpagefraction{.001}

\shorttitle{Dynamic Data-Free Knowledge Distillation by Easy-to-Hard Learning Strategy}    


\title {\LARGE Dynamic Data-Free Knowledge Distillation by Easy-to-Hard Learning Strategy}  

\tnotemark[1] 


%

\author[1]{Jingru Li}

\cormark[1]


\ead{jrlees@zju.edu.cn}


\credit{Conceptualization of this study, Methodology, Software, Writing - Original Draft, Validation, Formal analysis, Investigation, Writing - Review \& Editing, Visualization}

\affiliation[a]{organization={Zhejiang University},
            addressline={Zheda Rd.}, 
            city={Hangzhou},
            postcode={}, 
            state={Zhejiang},
            country={China}}




\author[1]{Sheng Zhou}

\fnmark[2]

\ead{zhousheng_zju@zju.edu.cn}

\credit{Methodology, Writing - Review \& Editing, Supervision}




\author[1]{Liangcheng Li}

\fnmark[3]

\ead{liangcheng_li@zju.edu.cn}

\credit{Resources, Writing - Review \& Editing, Supervision, Project administration}

\author[1]{Haishuai Wang}

\fnmark[4]

\ead{haishuai.wang@zju.edu.cn}

\credit{Methodology, Writing - Review \& Editing, Supervision}

\author[1]{Jiajun Bu}

\fnmark[5]

\ead{bjj@zju.edu.cn}

\credit{Funding acquisition, Writing - Review \& Editing, Supervision}

\author[1]{Zhi Yu}

\fnmark[6]

\ead{yuzhirenzhe@zju.edu.cn}


\credit{Funding acquisition, Writing - Review \& Editing, Supervision, Visualization}


\cortext[1]{Zhi Yu}

\fntext[1]{Corresponding author of this paper is Zhi Yu.}


\begin{abstract}
Data-free knowledge distillation (DFKD) is a widely-used strategy for Knowledge Distillation (KD) whose training data is not available. It trains a lightweight student model with the aid of a large pretrained teacher model without any access to training data. However, existing DFKD methods suffer from inadequate and unstable training process, as they do not adjust the generation target dynamically based on the status of the student model during learning. To address this limitation, we propose a novel DFKD method called CuDFKD. It teaches students by a dynamic strategy that gradually generates easy-to-hard pseudo samples, mirroring how humans learn. Besides, CuDFKD adapts the generation target dynamically according to the status of student model. Moreover, We provide a theoretical analysis of the majorization minimization (MM) algorithm and explain the convergence of CuDFKD. To measure the robustness and fidelity of DFKD methods, we propose two more metrics, and experiments shows CuDFKD has comparable performance to state-of-the-art (SOTA) DFKD methods on all datasets. Experiments also present that our CuDFKD has the fastest convergence and best robustness over other SOTA DFKD methods.
\end{abstract}



\begin{keywords}
Data-Free Knowledge Distillation \sep Curriculum Learning\sep Knowledge Distillation \sep Self-Paced Learning
\end{keywords}

\maketitle

\section{Introduction} \label{sec: intro}
Knowledge Distillation (KD) is a popular strategy to train smaller student models by leveraging knowledge from large, pretrained teacher models, and it significantly improves the performance of small student models \cite{hinton2015distilling,zhao2022decoupled}. Due to privacy concerns \cite{truong2020data} or resource constraints for training on large benchmarks like ImageNet, the training data is not accessible. To address this challenge, Data-Free Knowledge Distillation (DFKD) or Zero-Shot Knowledge Distillation (ZSKD) \cite{wang2021zero} has been proposed. It allows the training of student models without real training data, by using the teacher model's weights to reverse inference knowledge and generating pseudo samples to train the student model. This knowledge, which represents the pretrained weight of the teacher model, encapsulates prior information about the original training data and provides the target for the generation of the pseudo samples.

In Figure \ref{fig:prior-based methods} to \ref{fig:cmi}, different targets for the generation are set in different DFKD methods to reverse the above knowledge. Such targets include data priors \cite{chen2019data, yin2020dreaming, Nguyen2022BlackboxFK} (Figure \ref{fig:prior-based methods}), adversarial samples \cite{choi2021qimera, fang2019data} (Figure \ref{fig:boundary-based methods}), and previous memory samples \cite{fang2021contrastive, binici2022robust, binici2022preventing} (Figure \ref{fig:cmi}). They have achieved remarkable success, whose trained student models are comparable in performance to those in data-driven KD. Such DFKD methods prescribe a fixed target during the process of DFKD, and finally optimize a generative model and a student model. 

However, these fixed target settings of DFKD may be more challenging for the student at early learning process. We believe that a more productive method should adjust the training and generation targets with the status of the student model. This approach emulates human learning, where teachers modify their teaching strategies dynamically based on the student's learning ability. For instance, humans use a dynamic procedure to learn easy knowledge like arithmetic before hard knowledge like calculus, while a static approach is employed to learn both simultaneously as aforementioned DFKD methods do. Obviously, the former learning strategy is better for the student of the better learning experience and effeciency. Back to the field of machine learning, such dynamic target strategy is consistent with curriculum learning(CL) techniques\cite{bengio2009curriculum,wang2021survey,soviany2022curriculum}. It first learns the easiest part of the training data, followed by more difficult part, which is a \textbf{dynamic} target setting. It promises faster convergence rate and better local minima for the machine learning problem.

\begin{figure*}[!htbp]
     \centering
     \begin{subfigure}[b]{0.235\textwidth}
         \centering
         \includegraphics[width=\textwidth]{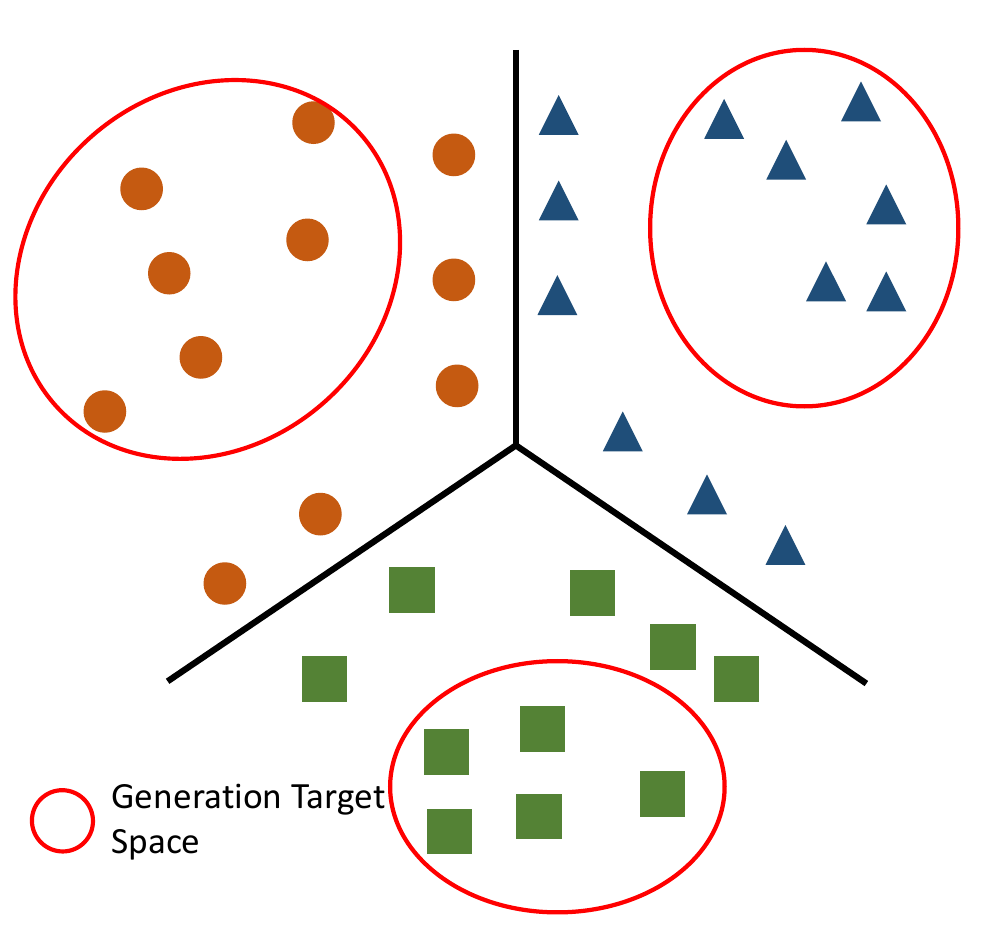}
         \caption{Prior-based methods.}
         \label{fig:prior-based methods}
     \end{subfigure}
     \hfill
     \begin{subfigure}[b]{0.235\textwidth}
         \centering
         \includegraphics[width=\textwidth]{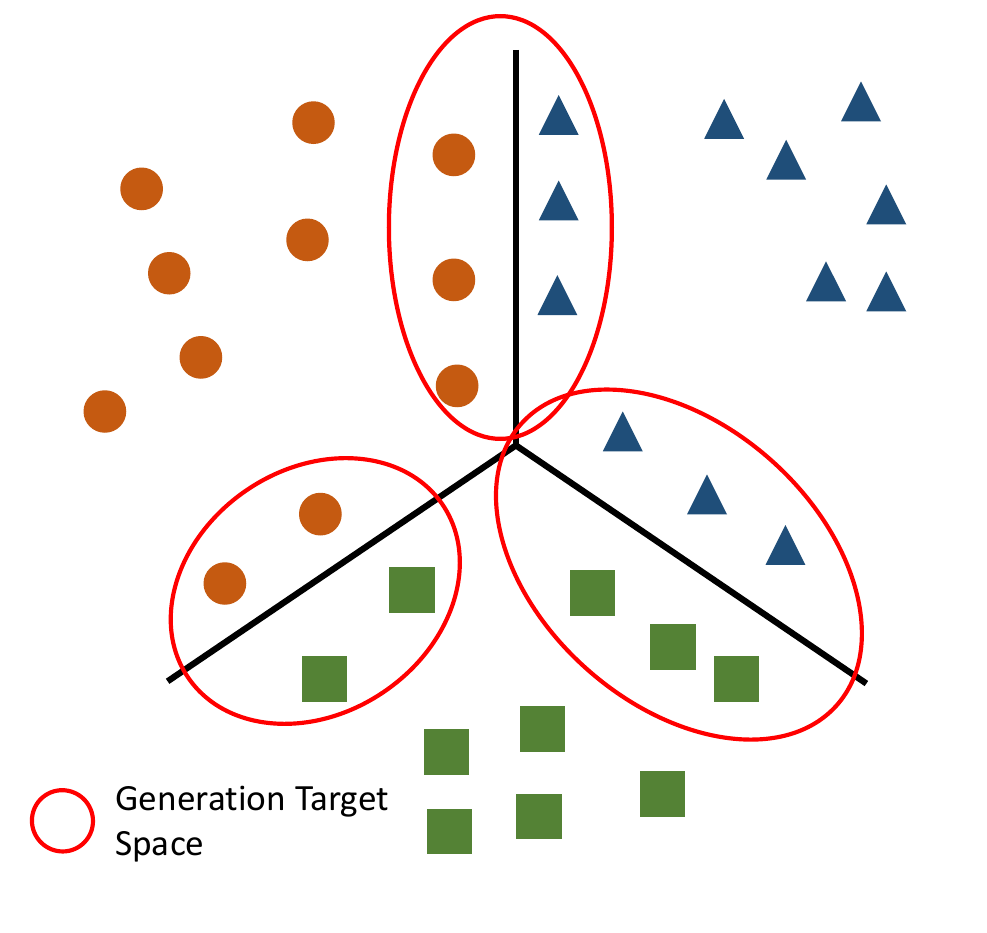}
         \caption{Boundary-based methods.}
         \label{fig:boundary-based methods}
     \end{subfigure}
     \hfill
     \begin{subfigure}[b]{0.235\textwidth}
         \centering
         \includegraphics[width=\textwidth]{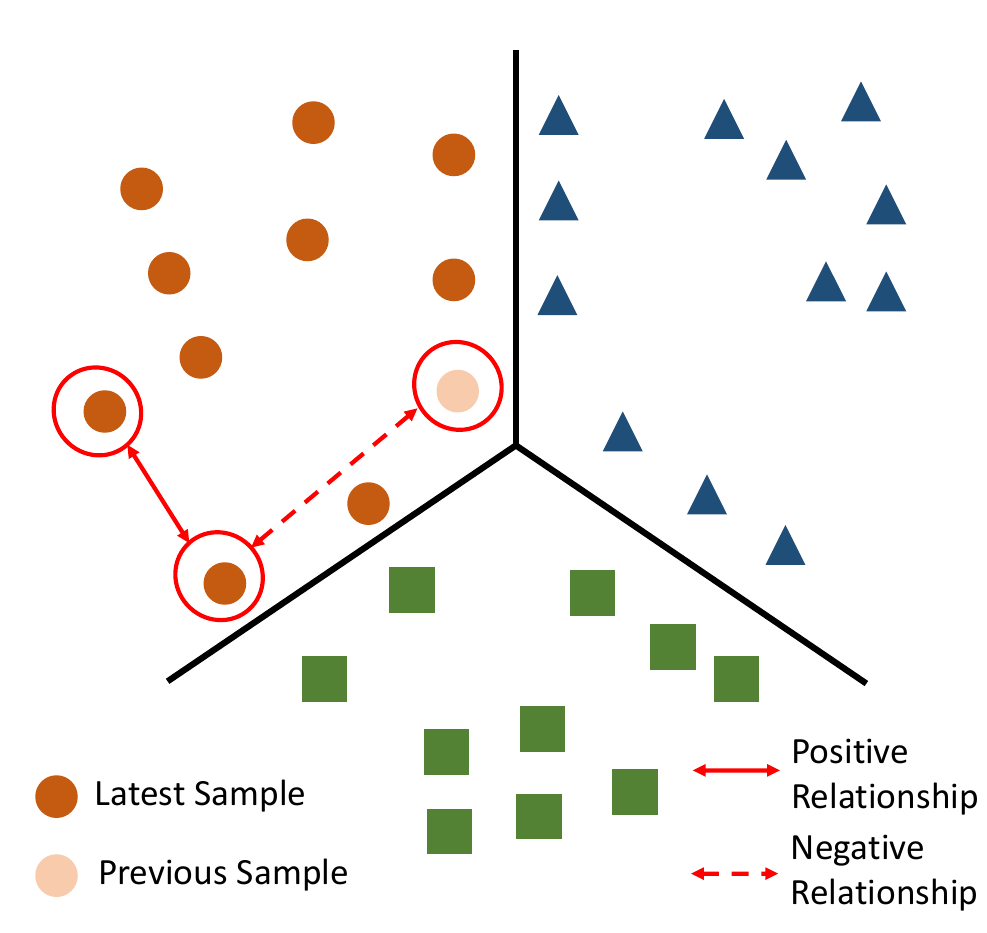}
         \caption{CMI\cite{fang2021contrastive}.}
         \label{fig:cmi}
     \end{subfigure}
     \hfill
     \begin{subfigure}[b]{0.235\textwidth}
         \centering
         \includegraphics[width=\textwidth]{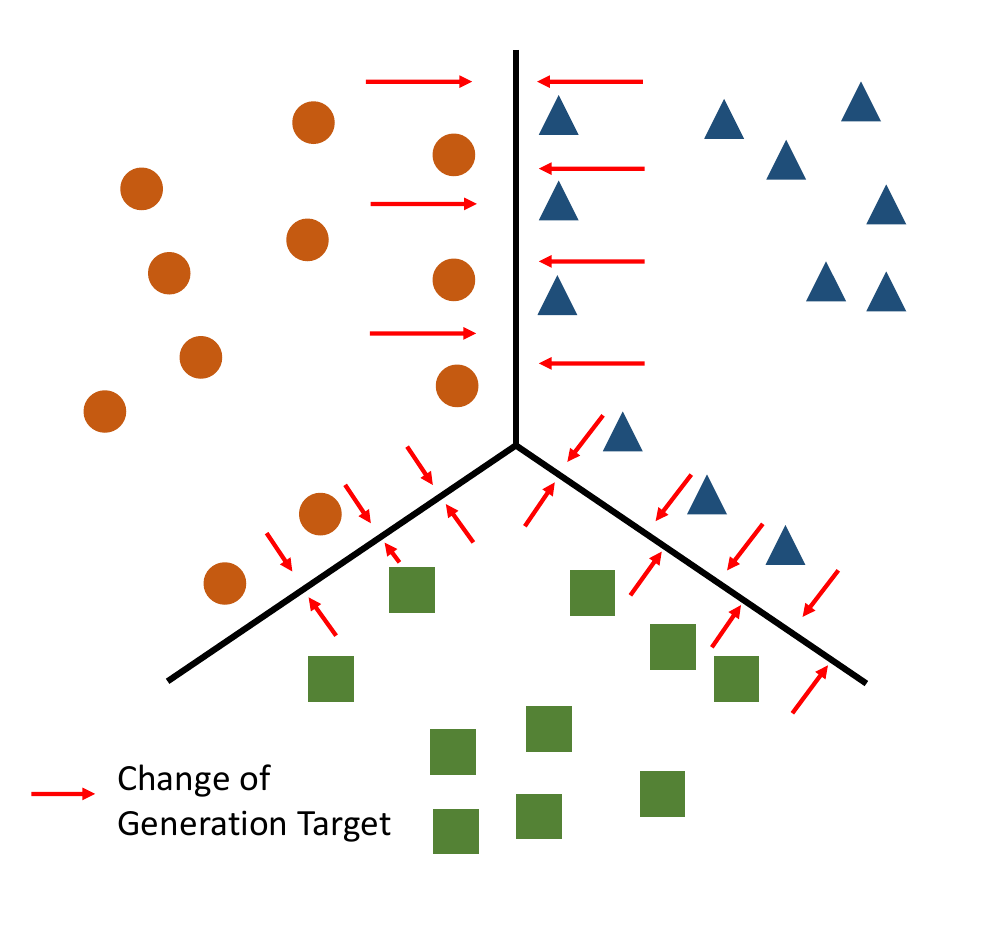}
         \caption{CuDFKD(Ours).}
         \label{fig:cudfkd}
     \end{subfigure}
        \caption{Differences between CuDFKD and other methods. Black lines represent the decision boundary by the teacher model, and the dots with different shapes represent pseudo samples with different classes. a) prior-based methods learn the generation target at the space of data prior. b) boundary-based methods learn pseudo samples close to the decision boundary among classes. c) memory-based methods like CMI\cite{fang2021contrastive} use the previously generated data samples as the generation target. d) Our CuDFKD learns a dynamic generation target from data prior to the decision boundary with time. Better viewed in color.}
        \label{fig:figure1}
\end{figure*}

To propose a dynamic DFKD method, we present a brand new dynamic learning approach for DFKD based on Curriculum Learning in this paper, called \textbf{Cu}rriculum \textbf{DFKD} (CuDFKD). We introduce a dynamic and adaptive strategy for generating pseudo samples in CuDFKD, based on an easy-to-hard approach. Our method allows the learning process to adapt to the student's capability. The difficulty for different training timestamps in DFKD is defined as the divergence between the teacher and student models. We generate pseudo samples from data prior to decision boundaries, which serves as a dynamic signal indicating the level of difficulty. Figure \ref{fig:cudfkd} illustrates the main idea of CuDFKD. To update the student model and generator parameters, we propose an alternative updating strategy (AOS) that is simple yet effective. Theoretical support for CuDFKD comes from self-paced learning (SPL) \cite{wang2021survey}. In practice, the dynamic modules are implemented by adjusting the gradient of adversarial samples and reweighting the generated pseudo samples in the objective function. Experimental results demonstrate that CuDFKD performs comparably to other state-of-the-art (SOTA) DFKD methods, with the added advantage of superior convergence.

The contributions of this paper are summarized as follows:

\begin{itemize}

  \item We propose a simple but effective DFKD method by dynamically adjusting the difficulty of generated pseudo samples, called CuDFKD. It's the first method to dynamically match the target generation with the status of student model.
  
  \item We demonstrate the convergence of CuDFKD by the theory of MM and VC-dimension. We also provide a detailed analysis and discussion of the advantages of CuDFKD over memory-based and boundary-based DFKD methods.

  \item We first utilize two more metrics to measure the performance of DFKD. Besides, Our CuDFKD performs best in CIFAR10, CIFAR100, Tiny ImageNet over various teacher-student pairs and metrics.
  
  \item The designed dynamic module in CuDFKD is proved to be effective by the best performance for both KD training and pseudo sample generation. Notably, our proposed CuDFKD method has the fastest convergence over other DFKD methods. It preserves the best performance even in the presence of noisy teachers.
\end{itemize}

This paper is organized as follows: section \ref{sec: rw} reviews some related work about DFKD and curriculum learning, and section \ref{sec: preliminary} discusses some basic concepts about curriculum learning and self-paced learning. Section \ref{sec: theo} outlines how to build up CuDFKD and proves its convergence by MM theory. Section \ref{sec: exp} reports some results and discussion about CuDFKD. Finally, section \ref{sec: future} concludes this paper and discusses future work. 

\section{Related Work}\label{sec: rw}

This section reviews the previous work of DFKD and curriculum learning and compares their differences with CuDFKD.

\subsection{Data-Free Knowledge Distillation}\label{subsec: dfkd_rl}

Data-Free Knowledge Distillation (DFKD) generates pseudo samples to optimize student model. According to the kind of target during generation pseudo samples, we categorize DFKD methods into three types, i.e.,  prior-based DFKD, boundary-based DFKD, and memory-based DFKD.

Prior-based DFKD methods \cite{chen2019data,yin2020dreaming,nayak2019zero,wang2021data,Nguyen2022BlackboxFK} imitate the original data distribution using inversion methods. ZSKD \cite{nayak2019zero,wang2021zero,micaelli2019zero} and SoftTarget \cite{wang2021data} model the output label distribution or intermediate feature maps with simple distributions. DAFL \cite{chen2019data}, DeepInversion, and Adaptive DeepInversion \cite{yin2020dreaming} use different regularization terms, such as adversarial divergence or batch normalization statistics, to generate more realistic and useful pseudo data samples. Additionally, some limited-data KD models use similar targets to guide the training stage, such as \cite{fang2021mosaicking}.

Boundary-based DFKD methods \cite{fang2019data,yin2020dreaming,choi2021qimera,choi2020data,truong2021data} are motivated by the thought of adversarial learning, where generative models are optimized to maximize the gap between the output distribution of teacher and student models. \cite{truong2021data,zhang2022qekd} treat the pretrained teacher model as a black-box model and use a min-max game to update the generator and student models. DFQ\cite{choi2020data} dynamically balances the sample generation between prior-based and boundary-based approaches and incorporates instance/category entropy loss for prior regularization. Qimera\cite{choi2021qimera} explores the effect of boundary samples on model quantization, providing a similar perspective to KD. They learn samples near the teacher model's decision boundary, treating them as hard samples. Such methods generally produce robust student models and high performance.

During the training process, \cite{binici2022preventing} points out that catastrophic forgetting may occurs, whereby some learned knowledge or gradients could not be preserved, leading to the student model getting trapped in local minima. Memory-based DFKD methods are proposed to rephrase the learned knowledge from the early student model. CMI\cite{fang2021contrastive}, PRE-DFKD\cite{binici2022robust}, and MB-DFKD\cite{binici2022preventing} are motivated by continual learning\cite{mazur2022target}. They use memory samples to avoid the catastrophic forgetting during KD training, achieving outstanding performance on different benchmarks. They preserve the knowledge from the early student model by setting up a memory bank\cite{fang2021contrastive,binici2022preventing} or reconstruction \cite{binici2022preventing} process, thereby assisting the learning of student model. CuDFKD, on the other hand, only uses a non-parameterized training scheduler and difficulty measurer, eliminating extra memory usage or parameter training.

DFKD can also be extended to graph data as "graph-free distillation"\cite{deng2021graph,wang2021online}, which is applicable in tasks such object detection \cite{banitalebi2021knowledge} and image super-resolution \cite{zhang2021data}. Additionally, some work\cite{Fang2022UpT1} proposes faster DFKD methods by meta-learning.

\subsection{Curriculum Learning}\label{subsec: rl in cl}
Curriculum learning (CL) has been widely used to train models by an easy-to-hard strategy \cite{bengio2009curriculum, wang2021survey,soviany2022curriculum}. Bengio \cite{bengio2009curriculum} provides a clear illustration of its convergence of it.

Automatic CL is a dynamic strategy to adjust the difficulty by the feedback of training process. Self-paced learning (SPL) \cite{kumar2010self} is the most widely used automatic CL method assigning data with different difficulties based on the training losses at each timestamp (or epoch). Several theoretical studies \cite{ma2018convergence,meng2017theoretical} provide a deep understanding of SPL and are categorized into majorization minimization (MM) \cite{caflisch1998monte} algorithm and concave optimization. They use transfer learning \cite{soviany2022curriculum} and uncertainty \cite{zhou2020uncertainty} as specific representative techniques to transfer from teacher. Recently, SPL is also applied to the field of unsupervised learning\cite{li2022unsupervised}, clustering\cite{zhang2022spaks}, anomaly detection\cite{yu2022deep} and graph\cite{gong2022self}.

Besides, some data-driven KD methods also use CL to enhance student learning from the teacher. For example, Xiang et al. \cite{xiang2020learning} use SPL for instance selection in long-tailed datasets, and Li et al. \cite{li2021dynamic} use a similar uncertainty curriculum to distill models from large pretrained language models. In this work, CL provides an adaptive training target for the generation process of DFKD. Additionally, it provides a theoretical understanding of accelerating the convergence of DFKD methods and contributes to the usage of CL in knowledge distillation.

\begin{figure*}[!htbp]
    \centering
    \includegraphics[width=0.95\textwidth]{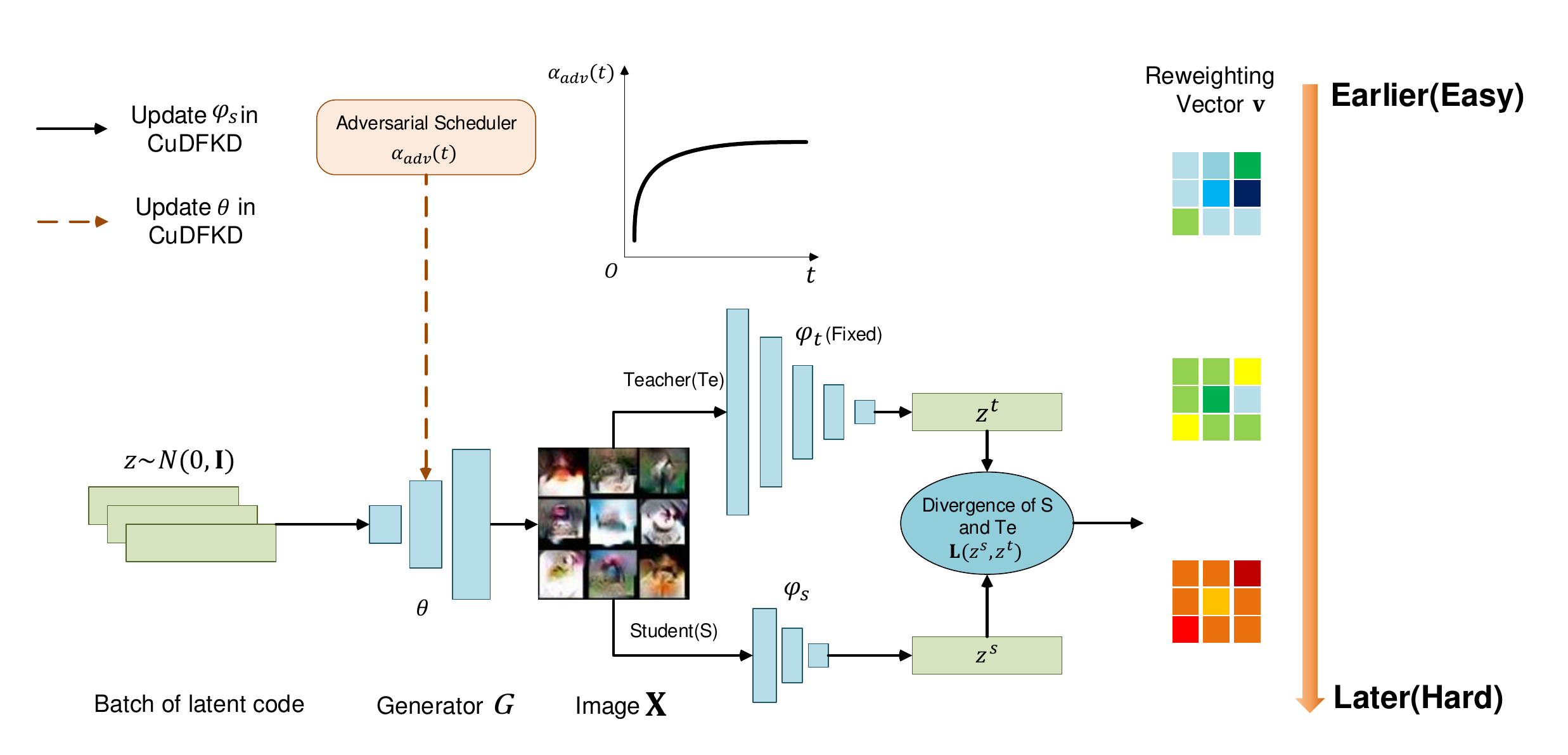}
    \caption{The mainstream of CuDFKD. We construct a dynamic DFKD generation and training target by 1) an adversarial scheduler $\alpha_{adv}(\tau)$ to adjust the gradient of the generator $G$, and 2) Reweighting factor $\mathbf{v}$ to reweight the generated pseudo samples. For $\mathbf{v}$, the warmer the colors are, the higher value of $\mathbf{v}$ is. Better viewed in screen.}
    \label{fig:workflow}
\end{figure*}

\section{Preliminary on CL} \label{sec: preliminary}
Curriculum Learning(CL) is widely used in the data-driven machine learning, i.e., given a training dataset $\mathcal{D} = \{(\mathbf{x}, y)\}$, where $y$ is the label of specific data $\mathbf{x}$. CL defines a difficulty scheduler $d(\mathbf{x}, t)$ to evaluate the difficulty of data samples and a training scheduler to find the subset $\mathbb{B}_t \in \mathcal{D}$ of training data at this difficulty. CL is designed to get the optimal parameter $\mathbf{w}^*$ by some schedulers, and the data subset series is organized to be easy-to-hard. If the difficulty scheduler function is continuous, the subset series should satisfy $\partial d(\mathbb{B}_i, t) / \partial t \geq 0$.

Instead of manually setting two schedulers, self-paced learning (SPL)\cite{kumar2010self} dynamically update difficulty by the training loss $\mathbf{L}(\mathbf{x}, y; \Theta)$. What's more, they define a reweighting factor $\mathbf{v}$ as a training scheduler, i.e.,

\begin{equation}\label{equ: spl}
    \min_{\mathbf{w} \in \mathbb{R}^d, \mathbf{v} \in [0,1]^d} \mathbf{F}(\mathbf{w}, \mathbf{v}) = 
    \mathbf{v}(\lambda, \mathbf{L})^T\mathbf{L}(\mathbf{x}, y; \Theta) + g(\lambda, \mathbf{v}),
\end{equation}

where the convex function $g(\lambda, \mathbf{v})$ is called a Self-Paced regularizer(SP-regularizer). It's an alternative optimization problem for parameter $\mathbf{w}$ and reweighting factor $\mathbf{v}$. Here $g(\lambda, \mathbf{v})$ is designed under some constraints. It is can be explicit defined by primary functions\cite{wang2021survey,soviany2022curriculum}. 

\section{Design and Discussion of CuDFKD}\label{sec: theo}

In this section, we implement the SPL framework to DFKD and propose a dynamic DFKD algorithm to update all the parameters, called \textbf{CuDFKD}. To better design the dynamic module of CuDFKD, there still exist some gaps, including:

\begin{itemize}
    \item \textbf{Q1:} How to define the loss function of the generative model $\mathbf{L}_{\theta}$ to set dynamic generation targets?
    \item \textbf{Q2:} How to design the loss function for KD learning to make the DFKD more adaptive to the student model?
\end{itemize}

In this work, we present an alternative updating strategy (AOS) that aims to update all parameters of CuDFKD more effectively. Specifically, we alternately train the generator $\theta$ and student model $\phi_s$ in each epoch. Figure \ref{fig:workflow} provides a visual representation of the overall workflow of our approach.

\subsection{Dynamic Generation Module} \label{sec: method}

In data-driven KD, the teacher plays a crucial role in ensuring a well student \cite{wang2021knowledge,bang2021distilling}, and this becomes even more critical in DFKD. For \textbf{Q1}, when updating $\theta$, we construct a dynamic generation module, with an objective function for generating pseudo samples motivated by previous DFKD methods \cite{choi2020data,fang2021contrastive,yin2020dreaming}, namely, 

\begin{equation}\label{equ: dfkd}
    \min_{\theta} \mathbf{L}_{\theta} =  \mathbb{E}_{x \sim p_{\theta}} [-\alpha_{adv}(\tau) \mathcal{L}_{adv}(x) + \mathcal{L}_{\theta}(x) ],
\end{equation}

Here $\mathcal{L}_{adv}(x) = D(f_{\phi_t}(x), f_{\phi_s}(x))$, and divergence function $D$ is either Jensen–Shannon divergence (JS divergence) or Kullback–Leibler divergence (KL divergenece). $\mathcal{L}_{\theta}(x)$ is a loss function for regularizing the pretrained teacher model and improving the quality for the generation of pseudo samples. $\mathcal{L}_{\theta}(x)$ includes categorical entropy\cite{choi2020data}, and batch normalization statistics alignment\cite{yin2020dreaming}. The regularization item $L_{\theta}$ is designed by,

\begin{equation}\label{equ: l_theta}
\begin{aligned}
    \mathcal{L}_{\theta}(x) &= \alpha_{bn} \mathcal{L}_{bn} + \alpha_{oh} \mathcal{L}_{oh} \\
    &= \alpha_{bn} \sum_{l} (\mathcal{D}(\mu_{(l)}, \mu_{bn,(l)}) + \mathcal{D}(\sigma_{(l)}^{2}, \sigma_{bn, (l)}^2)) \\ &+ \alpha_{oh} \sum_{i=1}^C CE(\mathbf{z}^t_{i}, \arg \max_j {z}^t_{i,j}),
\end{aligned}
\end{equation}

The objective of Equ. \ref{equ: l_theta} is two-fold. The first term aims to align the intermediate BN layer statistics of the pseudo sample with those of real data, thereby ensuring statistical consistency between the two distributions. Besides, the second term $CE(\mathbf{z}^t_{i}, \arg \max_j {z}^t_{i,j})$ serves as a cross-entropy loss between the teacher output logit $\mathbf{z}^t_{i}$ and the pseudo label $y_i = \arg \max_j {z}^t_{i,j}$, which balances the class distribution in the output space of the teacher model. This objective, represented by $\mathcal{L}_{\theta}$, facilitates the alignment of the generated distribution with the original data distribution $p_{data}$.

Compared with previous DFKD methods\cite{yin2020dreaming,choi2020data,fang2021contrastive}, our approach adjusts the difficulty of generated samples by tuning the hyperparameter $\alpha_{adv}$. Specifically, by setting $\alpha_{adv}(\tau) = 0$, we utilize only the data-prior term $\mathcal{L}_{\theta}(x)$ in Equ. \ref{equ: dfkd} as the generation target, while setting $\alpha_{adv}(\tau) = \infty$ results in the sole use of the adversarial term in Equ. \ref{equ: dfkd}. In CuDFKD, we design the generation target as shown in Equ. \ref{equ: grad_adv} to dynamically balance the data-prior and decision boundary. To ensure non-decreasing difficulty during training, we employ a predefined function $\alpha_{adv}(\tau)$, whose detailed design are discussed in Section \ref{subsec: hyper}. The adjustment of the difficulty gradient is illustrated as orange dashed arrows in Figure \ref{fig:workflow}, thereby ensuring a dynamic generation target.

\subsection{Dynamic Training Module}

In this section, we design a dynamic training module of DFKD, specifically in relation to \textbf{Q2}. Therefore, we adjust the difficulty when updating $\mathbf{\phi}_s$ in DFKD. As discussed in section \ref{subsec: rl in cl}, automatic curriculum learning (CL) is a technique that dynamically adjusts the difficulty of the training process, thereby enhancing both training speed and convergence. By implementing CL through appropriate pacing functions or transferring from the teacher, we can further improve the performance of DFKD. To achieve this, the pretrained teacher model serves as a better scheduler for the guidance of DFKD. This approach improves the effectiveness of DFKD by ensuring that the difficulty of DFKD process is adapted in a dynamic and efficient manner.

First of all, we construct a difficulty measurer for the above dynamic information. Compared with previous CL methods\cite{bengio2009curriculum}, the dataset $\mathcal{D}$ is not available. Therefore, we build a difficulty measurer $d(p_{\theta}(x), \tau)$ with respect to the built distribution estimator $p_{\theta}(x)$ and timestamp $\tau$. It should satisfy the following criteria:

\begin{itemize}
    \item It's monotonically non-decreasing w.r.t. timestamp $\tau$, i.e., $\frac{\partial d(p_{\theta}(x), \tau)}{\partial \tau} \geq 0$, which promises a easy-to-hard learning strategy.
    \item It guides the training of DFKD, i.e., to optimize a proper parameter by $\theta^* = \arg \min_{\theta} f(d(p_{\theta}(x), \tau), \mathbf{v}^*)$, where $f$ is a type of loss function.
\end{itemize}

Inspired by boundary-based methods\cite{choi2021qimera, choi2020data}, we consider pseudo samples close to the decision boundary as hard samples. Therefore, we define a tractable difficulty scheduler for a given dataset $\mathcal{D}$ as $d(\mathcal{D}, t) = \mathbb{E}_{x \sim \mathcal{D}} [D(p_{\phi_t}(x) || p_{\phi_s}(x))]$, where $\phi_t$ and $\phi_s$ denote the parameters of the teacher and student models, respectively. Here $D$ is used to minimize the objective function $\mathbf{F}(\mathbf{w}, \mathbf{v})$ in the DFKD method. In the setting of DFKD, the dataset $\mathcal{D}$ represents the distribution of the trained dynamic generation module. The dynamic strategy $\mathbf{v}^*$ is then calculated to control the level of difficulty at a fine-grained level, and it is considered as another dynamic target for the training stage of DFKD. To this end, we define $\lambda(\tau)$ as a linearly increasing function with respect to the epoch $\tau$. The objective function for optimizing $\mathbf{\phi_s}$ can then be expressed as follows:

\begin{equation}\label{equ: s}
\begin{aligned}
    &\min_{\mathbf{\phi}_s} \mathbf{F}(\mathbf{\phi}_s) \\ &= \mathbf{v}^*(\lambda, \mathbf{L})^\text{T} \mathbb{E}_{x \sim p_{\theta}} [D(\frac{z_t(x)}{T}, \frac{z_s(x; \mathbf{\phi}_s)}{T})] + g(\lambda, \mathbf{v}^*) \\ &=
    \sum_{i=1}^N v_i^*(\lambda, l_i ) l_{i, \phi_s}(G_{\theta}(z)) + g(\lambda, v^*_i),
\end{aligned}
\end{equation}

The logit output at the final layer of the teacher and student models is denoted by $z_t$ and $z_s$, respectively. The temperature, $T$, defined in the original KD method \cite{hinton2015distilling} is used to soften the output distribution of $\mathbf{\phi}_t$ and $\mathbf{\phi}_s$. The loss function $\mathbf{L}$ in Equ. \ref{equ: s} is given by $D(\frac{z_t(x)}{T}, \frac{z_s(x; \mathbf{\phi}_s)}{T})$, where $D$ can be one of KL divergence, JS divergence, or $l_1$-loss \cite{choi2020data,wang2021data,fang2021contrastive}. The second line of Equ. \ref{equ: s} represents the implementation of the first line. Therefore, to update $\phi_s$, we compute the dynamic difficulty by $\mathbf{v}^*(\lambda, \mathbf{L})^T \mathbb{E}_{x \sim p_{\theta}} [D(\frac{z_t(x)}{T}, \frac{z_s(x; \mathbf{\phi}_s)}{T})]$.

\begin{algorithm}
    \caption{CuDFKD}
    \label{alg: main}
    \LinesNumbered 
    \KwIn{A pretrained teacher model $\phi_t$, Student model $\phi_s$, $\lambda$-scheduler $\lambda(\tau)$, steps for training $\theta$, $n_G$, steps for training $\mathbf{\phi}_s$, $n_s$}
    \KwOut{A trained student model $\phi_s$.}
    
    $\tau = 0$;
    
    \While{not converge}{

    \For{i = 1:$n_G$}{
    
        $\alpha_{adv} \leftarrow \alpha_{adv}(\tau)$ by Equ. \ref{equ: grad_adv};
    
        Calculate $\mathbf{L}_{\theta}$ by Equ. \ref{equ: dfkd};
        
        $\mathbf{L}_{\theta}$.backward();
    }
    
    \For{t = 1:$n_s$}{
    
        $\lambda \leftarrow \lambda(\tau)$;
        $\mathbf{v} \leftarrow \mathbf{v}^*(\lambda, \mathbf{L})$;
    
        Calculate $\mathbf{F}(\mathbf{\phi}_s) $ by Equ. \ref{equ: s};
        
        $\mathbf{F}(\mathbf{\phi}_s)$.backward();
        
    }

    }
    \end{algorithm}

\subsection{Algorithm of CuDFKD}

This section proposes the alternative updating strategy (AOS) algorithm for CuDFKD. It incorporates self-paced learning with various strategies to make a dynamic learning process. The algorithm is presented in Algorithm \ref{alg: main}, where $\theta$ and $\phi_s$ are updated alternatively using equations \ref{equ: l_theta} and \ref{equ: s}. Specifically, the generation dynamic module and training dynamic module is represented by lines 4 and 9, respectively.

As discussed in Section \ref{sec: preliminary}, Algorithm \ref{alg: main} implements an alternative updating strategy (AOS) for three parameters: the generator $\theta$ (lines 5-6), student model $\phi_s$ (lines 10-11), and reweighting factor $\mathbf{v}$ (line 9). Assuming that these parameters have time costs $T_g$, $T_s$, and $T_v$, respectively, the time cost of an inner loop (lines 3-12) is given by $T_{inner} = O(n_GT_g + n_s(T_s + T_v))$. Furthermore, given that the hyperparameter $\lambda(\tau)$ and reweighting factor $\mathbf{v}$ are predefined primary functions, we assume that $T_g$ and $T_s$ are significantly larger than $T_v$. Therefore, we can approximate the time cost of an inner loop as $T_{inner} \approx O(n_GT_g + n_sT_s)$, which is similar to the time cost of the simplest DFKD method, DAFL \cite{chen2019data}. Furthermore, CuDFKD does not require any additional memory during its implementation.

\subsection{Does CuDFKD Converge?}

To ensure the convergence of the student model $\phi_s$, we should first find a suitable parameter $\theta^*$ to generate pseudo samples with varying degrees of difficulty, represented by the learned distribution $p_{\theta}(\mathbf{x})$. The reweighting factor $\mathbf{v}^*$ is then used to fine-tune the difficulty level. Finally, following the approach proposed in \cite{meng2017theoretical}, we analyze the convergence of CuDFKD using the following proposition:  

\begin{proposition}[Convergence of CuDFKD]
The objective function $\mathbf{F}(\mathbf{\phi}_s, \mathbf{v}, \theta)$ is equal to the Majorization Minimization(MM) step for optimizing the following latent function:
\begin{equation}
    \mathbf{F}_{\lambda}(\mathbf{L}(\theta, \mathbf{\phi}_s)) = \int_{0}^{\mathbf{L}} \mathbf{v}^*(\lambda, \mathbf{L}) d\mathbf{L}.
\end{equation}
Thus using AOS promises the convergence of all parameters.

\end{proposition}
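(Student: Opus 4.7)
The plan is to establish (i) that for any fixed $(\phi_s,\theta)$ the SPL objective $\mathbf{F}(\phi_s,\mathbf{v},\theta)$ is a tangent upper bound of the latent function $\mathbf{F}_\lambda(\mathbf{L}(\theta,\phi_s))$, and (ii) that the AOS updates in CuDFKD implement the two canonical half-steps of majorization--minimization on $\mathbf{F}_\lambda$, so that the monotone descent guarantees of \cite{meng2017theoretical,ma2018convergence} apply directly.

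For step (i) I would fix $(\phi_s,\theta)$ and analyze the inner problem $\min_{\mathbf{v}\in[0,1]^d}\mathbf{v}^T\mathbf{L}+g(\lambda,\mathbf{v})$. Since $g(\lambda,\cdot)$ is convex under the SP-regularizer axioms of \cite{kumar2010self}, the minimizer $\mathbf{v}^*(\lambda,\mathbf{L})$ exists and is characterized by its KKT condition; the envelope theorem then gives $\partial_{\mathbf{L}}\min_{\mathbf{v}}\mathbf{F}=\mathbf{v}^*(\lambda,\mathbf{L})$, and integrating from $0$ to $\mathbf{L}$ yields $\min_{\mathbf{v}}\mathbf{F}(\phi_s,\mathbf{v},\theta)=\int_0^{\mathbf{L}}\mathbf{v}^*(\lambda,s)\,ds=\mathbf{F}_\lambda(\mathbf{L}(\theta,\phi_s))$. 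This immediately delivers both MM conditions: \emph{tangency} $\mathbf{F}(\phi_s,\mathbf{v}^*,\theta)=\mathbf{F}_\lambda(\mathbf{L})$, and \emph{majorization} $\mathbf{F}(\phi_s,\mathbf{v},\theta)\geq\mathbf{F}_\lambda(\mathbf{L})$ for every admissible $\mathbf{v}$, which is exactly the upper-bound requirement for an MM surrogate.

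For step (ii) I would read the AOS loop as a sequential MM on $\mathbf{F}_\lambda$. At iteration $t$, computing $\mathbf{v}^{(t+1)}=\mathbf{v}^*(\lambda,\mathbf{L}(\theta^{(t)},\phi_s^{(t)}))$ rebuilds a majorizer tight at the current iterate (the ``E-style'' step), and decreasing $\mathbf{F}(\cdot,\mathbf{v}^{(t+1)},\cdot)$ in $(\phi_s,\theta)$ via the updates derived from equations~(\ref{equ: dfkd}) and~(\ref{equ: s}) plays the role of the M-step. Chaining yields
\[
\mathbf{F}_\lambda(\mathbf{L}^{(t+1)})\leq\mathbf{F}(\phi_s^{(t+1)},\mathbf{v}^{(t+1)},\theta^{(t+1)})\leq\mathbf{F}(\phi_s^{(t)},\mathbf{v}^{(t+1)},\theta^{(t)})=\mathbf{F}_\lambda(\mathbf{L}^{(t)}),
\]
i.e.\ monotone descent on the latent function. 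Assuming $\mathbf{F}_\lambda$ is bounded below, $\mathbf{v}^*$ is continuous in $\mathbf{L}$, and the iterates remain in a compact set, the sequential-MM convergence theorem of \cite{meng2017theoretical} then yields subsequential convergence of $(\phi_s^{(t)},\theta^{(t)})$ to a critical point of $\mathbf{F}_\lambda$.

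The principal obstacle is verifying the M-step for the generator parameter $\theta$: equation~(\ref{equ: dfkd}) is driven by $\mathbf{L}_\theta=-\alpha_{\mathrm{adv}}(t)\,\mathbb{E}_{x\sim p_\theta}[D(f_t(x),f_s(x))]+\mathbb{E}[\mathcal{L}_\theta(x)]$, whose adversarial term pushes $\theta$ toward \emph{increasing} the distillation loss $\mathbf{L}$, in apparent tension with the MM descent chain. To reconcile this I would split the argument: view the scheduled coefficient $\alpha_{\mathrm{adv}}(t)$ (with $\partial\alpha_{\mathrm{adv}}/\partial t\geq 0$) as controlling the \emph{curriculum trajectory} of the majorizer rather than its descent, absorb the data-prior regularizer $\mathcal{L}_\theta$ into a modified SP-regularizer $\tilde g(\lambda,\mathbf{v})$, and apply the MM descent chain to the $(\mathbf{v},\phi_s)$ sub-iterates with $\theta$ treated as the data-generating component of the surrogate. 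Verifying first-order stationarity at a limit point and bounding the schedule-induced perturbation are then routine once the majorizer identity of step~(i) is in hand.
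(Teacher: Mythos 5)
Your step~(i) reconstructs the paper's majorizer identity by a slightly cleaner route: where the paper simply invokes Theorem~1 of \cite{meng2017theoretical} to assert the tangent-line bound $F_\lambda(\mathbf{L}(\mathbf{w})) \le \mathbf{Q}(\mathbf{w}\mid\mathbf{w}^*)$ for the concave latent $F_\lambda$, you derive $\min_{\mathbf{v}} \mathbf{F}(\phi_s,\mathbf{v},\theta) = F_\lambda(\mathbf{L})$ from the envelope theorem (assuming, as the SP-regularizer axioms do, that the value at $\mathbf{L}=0$ is normalized to zero), which delivers tangency and majorization in one stroke. Your step~(ii) then matches the paper's reading of AOS as sequential majorization--minimization exactly: the E-style reset $\mathbf{v}^{(t+1)} = \mathbf{v}^*(\lambda, \mathbf{L}^{(t)})$ followed by an M-step in $\mathbf{w}=(\theta,\phi_s)$ with $\mathbf{v}$ frozen, chained into the monotone-descent inequality. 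So the two arguments are on the same route.

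Where you differ from the paper is in \emph{honesty about the gap}, and this is worth saying plainly. You flag that the M-step for $\theta$ is \emph{not} an instance of $\arg\min_{\mathbf{w}}\mathbf{Q}(\mathbf{w}\mid\mathbf{w}^k)$: equation~(\ref{equ: dfkd}) drives the generator to \emph{increase} the divergence $D(f_t,f_s)$ through the adversarial term, so the $\theta$ half of the update raises $\mathbf{L}$ and the middle inequality $\mathbf{F}(\phi_s^{(t+1)},\mathbf{v}^{(t+1)},\theta^{(t+1)}) \le \mathbf{F}(\phi_s^{(t)},\mathbf{v}^{(t+1)},\theta^{(t)})$ fails as written. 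The paper's own proof also has this gap; it papers over it with the sentence ``As the item loss function \ldots is defined as $\mathbb{E}_{x\sim p_\theta}[D_{KL}(\cdot\|\cdot)]$, and $\mathbf{v}^*$ is fixed, it's equal to the objective function of KD. By the previous DFKD works \ldots AOS strategy promises convergence,'' which is an appeal to prior empirical practice rather than an argument that the adversarial $\theta$-update descends the surrogate. Your proposed repair --- restricting the exact MM descent chain to the $(\mathbf{v},\phi_s)$ block and treating the $\alpha_{\mathrm{adv}}(t)$-scheduled generator move as a time-varying perturbation of the surrogate, with $\mathcal{L}_\theta$ folded into a modified regularizer --- is the right direction, but it remains a sketch: to close it you would need to bound the $\theta$-induced increase in $\mathbf{L}$ between consecutive student updates in terms of $\partial\alpha_{\mathrm{adv}}/\partial t$ and then invoke an inexact-MM or perturbed-descent result, none of which you (or the paper) actually carry out. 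As matters stand, the Proposition's claim that AOS ``promises the convergence of all parameters'' is rigorously justified only for the $(\mathbf{v},\phi_s)$ block under a fixed generator; the rest rests on the unproved assertion that the adversarial schedule is a benign perturbation.
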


\begin{proof}
According to Theorem 1 in \cite{meng2017theoretical}, and set parameter $\mathbf{w} = (\theta, \mathbf{\phi}_s)$, it is proved that

\begin{equation}\label{equ: obj}
    \begin{aligned}
    &\mathbf{F}_{\lambda}(\mathbf{L}(\mathbf{w})) \leq \mathbf{Q}(\mathbf{w} | \mathbf{w}^*) \\ &= F_{\lambda}(\mathbf{L}(\mathbf{w}^*)) + \mathbf{v}^*(\lambda, \mathbf{L}(\mathbf{w})) [\mathbf{L}(\mathbf{w}) - \mathbf{L}(\mathbf{w}^*)].
    \end{aligned}
\end{equation}

For any $\mathbf{w}^*$ and concave function $\mathbf{F}$. Thus for the \textit{majorization step}, the $\theta$ is fixed, and the $\mathbf{v}^*$ is achieved by $\mathbf{v}^*(\lambda, \mathcal{L}(\mathbf{w})) = \arg\min_{\mathbf{v} \in [0,1]^d} \mathbf{F}(\mathbf{w}, \mathbf{v})$, where $\mathbf{F}(\mathbf{w}, \mathbf{v})$ is defined in Equ. \ref{equ: spl}.

For the \textit{minimization step}, by setting the update step of $\mathbf{w}$ as $\mathbf{w}^{k+1} = \arg \min_{\mathbf{w}} \mathbf{Q}(\mathbf{w} | \mathbf{w}^{k})$, it's calculated by,

\begin{equation}\label{equ: proof_dfkd}
    \mathbf{w}^{k + 1} = \arg \min_{\mathbf{w}}  \mathbf{v}^*(\mathbf{L}(\mathbf{w}^k), \lambda)^T\mathbf{L}(\mathbf{w}).
\end{equation}

As the item loss function for student $\mathbf{L}(\mathbf{w})$ is defined as $\mathbb{E}_{x \sim p_{\theta}(\mathbf{z})}[D_{KL}(p_{\phi_t}(x) || p_{\phi_s}(x))]$, and $\mathbf{v}^*$ is fixed, it's equal to the objective function of KD. By the previous DFKD work, when setting data or diversity prior to the DFKD model, the AOS strategy promises convergence to optimize $\mathbf{w} = (\theta, \mathbf{\phi}_s)$.

According to MM theory\cite{caflisch1998monte}, the lower bound of the objective function on Equ. \ref{equ: obj} is monotonically non-decreasing and thus promises convergence.

\end{proof}

\subsection{Does CuDFKD work well?}
In this section, we provide a theoretical perspective on whether the easy-to-hard strategy improves the performance of DFKD. While we observed that CuDFKD might not perform as well when distilling small neural networks, we briefly discuss the results. By the VC theory of distillation \cite{wang2021knowledge}, the error between the teacher and student in CuDFKD is bounded by: 

\begin{equation}\label{equ: vckd}
\begin{aligned}
    R(f_s) - R(f_t) &\leq O(\frac{|\mathcal{F_s}|_C}{n^{\alpha}}) + \epsilon_l =\mathcal{L}_{gb, sr},
\end{aligned}
\end{equation}

Where $R()$ denotes the error of a specific function, and $f_s \in \mathcal{F}_s, f_t \in \mathcal{F}_t$ are the function set of student and teacher model, respectively. Both of them are related to approaching an unknown target function $f \in \mathcal{F}$. The parameter $\frac{1}{2} \leq \alpha \leq 1$ is related to the learning rate of the training model, and $\epsilon_l$ is the approximation error between the teacher and student functions. $n$ denotes the number of data samples. In the setting of DFKD, the learning rate $\alpha$ is manually set by providing a learning rate $lr_s$ for the KD training.

To avoid the student model being trapped into local minima during the early training, CuDFKD generates easy pseudo samples and utilizes the reweighting factor to adjust the difficulty level. It leads to a smaller value of $\mathcal{L}_{gb, sr}$ at an early stage of training, as the easy-to-hard strategy and the choice of divergence $D(z_t, z_s)$ result in $\alpha$ being close to $\frac{1}{2}$ and $\epsilon_l$ being small. As the student model grows, $|\mathcal{F}_s|_{C}$ also increases, increasing $\epsilon_l$ and $\alpha$, making it possible for the student model to generalize better.

The above discussion presents that while updating $\phi_s$, CuDFKD helps the student better approach the teacher. Similarly, the error gap between the teacher and target(ground truth) function is,
\begin{equation} \label{equ: tr}
    \begin{aligned}
    R(f_{t})-R(f_{r}) &\leq O\left(\frac{\left|\mathcal{F}_{t}\right|_{C}}{n^{\alpha_{t r}}}\right)+\epsilon_{t r} = \mathcal{L}_{gb, tr}.
    \end{aligned}
\end{equation}

The error gap between the target and student is calculated by adding Equ. \ref{equ: tr} and Equ. \ref{equ: vckd}, i.e., 

\begin{equation} \label{equ: sr}
\begin{aligned}
 R(f_s) - R(f_r) &\leq \mathcal{L}_{gb, tr} +  \mathcal{L}_{gb, st}.
\end{aligned}
\end{equation}

The component $\mathcal{L}_{gb, tr}$ solely depends on the generator's training performance. During the generation stage of CuDFKD, the optimal sample for the teacher is identified using the loss $\mathcal{L}_{oh}$, resulting in a low value of $\mathcal{L}_{gb, tr}$ during training. Consequently, this guarantees a lower bound on the sum of $\mathcal{L}_{gb, tr}$ and $\mathcal{L}_{gb, st}$ during training, ultimately leading to improved classification performance of the student model.

\subsection{Relationship with the Memory-based DFKD Methods}\label{subsec: memory}

CuDFKD shares similar concept with memory-based DFKD methods, as they both focus on improving the performance of the early student model. In this section, we explore the differences between CuDFKD and memory-based methods such as contrastive distillation (CMI) \cite{fang2021contrastive} and robust DFKD (PRE-DFKD) \cite{binici2022robust}. Typically, memory-based methods use past information during the generation stage to increase diversity and capture the learned information from the early steps. CMI and PRE-DFKD employ additional memory to store the generated pseudo samples from previous epochs, which helps the small model understand better. In contrast, CuDFKD uses dynamic difficulty and self-paced learning to generate pseudo samples that are more adaptable to the current state of the student model.

Some other continual learning methods, like \cite{wang2022learning,mazur2022target}, utilize regularization items to refine the preceding training information. Such methods assume that deep neural networks are overparameterized, and regularization helps to maintain the learned knowledge while preventing overfitting. While these methods differ from CuDFKD in their approach, they share a similar goal of improving the generalization ability of the student model. The formula is

\begin{equation}\label{equ: contL}
    L(\theta) = L_t(\theta) + \frac{\lambda}{2} \sum \Omega(\theta - \theta^*_{t-1})^2,
\end{equation}

$\tau$ represents the current training task, while $\tau-1$ denotes the task from the previous timestamp. The second item in Equ. \ref{equ: contL}, $\Omega(\theta - \theta^{t-1})^2$, relates to the preceding training epochs. Equ. \ref{equ: contL} bears similarity to Equ. \ref{equ: s}, as continual learning is a special case of self-paced learning by setting $v_{ContL}(\lambda, \mathbf{L}) = \mathbf{1}^{d}$ and $g_{ContL}(\lambda, \mathbf{v}) = \Omega(\theta - \theta^{t-1})^2$. It is worth noting that $g_{ContL}(\lambda, \mathbf{v})$ is convex when $\Omega$ is semi-positive definite. In continual learning, the regularizer term $g$ must incorporate parameters from previous epochs, while in self-paced learning, $\mathbf{v}$ and $g$ are defined by the current timestamp, $\tau$.

With the predefined $g$ in continual learning, CuDFKD can adjust the difficulty of samples early on in the training process, which can mitigate forgetting during short times. To enhance performance further, we aim to incorporate more information about early samples, which can better address the forgetting problem.

\section{Experiments} \label{sec: exp}
In this section, we present an evaluation of the performance and convergence of CuDFKD on various benchmarks and teacher-student pairs. We also conduct an ablation study to verify the effectiveness of the curriculum learning strategy. Additionally, we also explore the advantages of CuDFKD over other DFKD methods from several perspectives.

\subsection{Experimental Setup} \label{subsec:exp}

\begin{sidewaystable}[!htbp]
    \centering
    \caption{The DFKD performance on CIFAR10 and different teacher-student pairs. All the results are achieved from our implementation. Here \textit{WRN} is in short of \textit{Wider ResNet}. All chosen performances are the best performance during all runs of training. The metrics are defined in section \ref{subsec:exp}. Our CuDFKD model is bolded at the last line.}
    \label{table:main_result}
    \begin{threeparttable}[b]
    
    \begin{tabular}{c|ccc|ccc|ccc|ccc}
    \hline
    DataSet      & \multicolumn{12}{c}{CIFAR10}      \\ \hline
    Teacher      & \multicolumn{3}{c|}{ResNet34\tnote{1}}                                                                       & \multicolumn{3}{c|}{VGG11}                                                                          & \multicolumn{3}{c|}{WRN-40-2}                                                                       & \multicolumn{3}{c}{WRN-40-2}                                                                       \\
    Student      & \multicolumn{3}{c|}{ResNet18}                                                                       & \multicolumn{3}{c|}{ResNet18}                                                                       & \multicolumn{3}{c|}{WRN-40-1}                                                                       & \multicolumn{3}{c}{WRN-16-2}                                                                       \\ \hline
    Metric       &  Acc@1 & Agree@1 & $L_p$          &  Acc@1 & Agree@1 & $L_p$    &  Acc@1 & Agree@1 & $L_p$  &  Acc@1 & Agree@1 & $L_p$                  \\ \hline
    Vanilla Teacher   & 95.70                              & 100.00                          & 1.0000             & 92.25                              & 100.00                                   & 1.0000             & 94.87                              & 100.00                                   & 1.0000             & 94.87                              & 100.00                                   & 1.0000             \\
    Vanilla Student    & 94.23                              & 94.91                                    & 0.8276             & 94.23                              &          92.67              &          0.7868      & 91.21                              &       91.42          &       0.8046             & 90.42                              &           90.91                               &    0.7998               \\ \hline
    DAFL\cite{chen2019data}         & 91.45                              &  93.20             & 0.7686             & 81.10                              & 82.30                                    & 0.6303             & 81.33                              & 80.21                                    & 0.6445             & 81.55                              & 80.21                                    & 0.6445             \\
    ZSKT\cite{micaelli2019zero}         & 91.60                              & 93.57                                    & 0.7968             & 89.46                              &            90.24                             &           0.7193         & 86.07              &         88.42                                 &       0.6998             & 89.66                              & 91.19                                    & 0.7464             \\
    ADI\cite{yin2020dreaming}          & 93.26                              & 95.33                                    & 0.8478             & 90.36      &        93.11          & 0.8063       & 87.18                              & 88.53             &          0.7746         & 84.50                             &        85.61                                  &       0.7279             \\
    DFQ\cite{choi2020data}          & 94.61                              & 97.07                                    & 0.8492             & 90.84                              & 93.14                                    & 0.7568             & 91.69                              & 92.16                                    & 0.7380             & 92.01                              & 93.50                                    & 0.7826             \\
    CMI\cite{fang2021contrastive}          & 94.85                              & 96.46                                    & 0.8747             & 91.13                              & 94.18                                    & 0.8213             & 92.20                              & 93.59                                    & 0.8305             & 92.08                              & 94.01                                    & 0.8460             \\ 
    PRE-DFKD\cite{binici2022preventing}    &  91.65                              & 93.53                       & 0.8209 & 87.26                 & 89.88                      & 0.7527 & 86.68                 & 88.13                       & 0.7529 & 83.57                 & 84.67                      & 0.7146 \\ \hline
    \textbf{CuDFKD(Ours)} & \textbf{95.28}                     & \textbf{98.20}                           & \textbf{0.8915}    & \textbf{91.61}                     & \textbf{96.00}                           & \textbf{0.8267}    & \textbf{93.18}                     & \textbf{95.27}                           & \textbf{0.8440}    & \textbf{92.94}                     & \textbf{95.15}                           & \textbf{0.8477}    \\ \hline
    \end{tabular}
    
    \begin{tablenotes}
    \item[1] The result is implemented by ourselves in the same framework with CMI, by the same hyperparameter as the original paper for the previous DFKD methods. Actually, we cannot achieve the reported result from the original paper in CMI, and we report the highest performance over all trials of each method to promise the fairness of this experiment.
    \end{tablenotes}
    \end{threeparttable}
    
    \end{sidewaystable}

For the benchmarks, we use CIFAR10, CIFAR100\cite{krizhevsky2009learning}, and Tiny ImageNet\cite{le2015tiny}. The image size for CIFARs is $32 \times 32$, and for Tiny ImageNet is $64 \times 64$. For the models for distillation, we use ResNet\cite{he2016deep}, VGG\cite{simonyan2014very}, and Wider ResNet\cite{zagoruyko2016wide}. Please refer to table \ref{table:main_result} and \ref{table:main_result2} for different teacher-student pairs and other baseline methods. We use a generator to parameterize the generation process, i.e., $x = G(z)$. All experiments are implemented on NVIDIA 3090 TI GPUs.  For the baselines, we compare SOTA DFKD methods as DAFL\cite{chen2019data}, ZSKT\cite{micaelli2019zero}, ADI\cite{yin2020dreaming}, DFQ\cite{choi2020data}, CMI\cite{fang2021contrastive} and PRE-DFKD\cite{binici2022robust}. Our implementation is based on the framework provided by the paper from CMI\cite{fang2021contrastive}. Please refer to Appendix for detailed hyperparameters, the architecture of generators, and adversarial schedulers. We run all the methods for 250 epochs in CIFAR10, and 300 epochs in CIFAR100 and Tiny ImageNet and record the peak performance during all trainings. Our implementation is presented in \href{https://github.com/ljrprocc/DataFree}{https://github.com/ljrprocc/DataFree}. 

\begin{sidewaystable}[!htbp]
    \centering
    \caption{The DFKD performance on CIFAR100 and different teacher-student pairs. All results are achieved from our implementation. Here \textit{WRN} is in short of \textit{Wider ResNet}. All chosen performances are the best performance during all runs of training. The metrics are defined in section \ref{subsec:exp}. Our CuDFKD model is bolded at the last line.}
    \label{table:main_result2}
    \begin{threeparttable}[b]

    \begin{tabular}{c|ccc|ccc|ccc|ccc}
    \hline
    DataSet      & \multicolumn{12}{c}{CIFAR100}      \\ \hline
    Teacher      & \multicolumn{3}{c|}{ResNet34\tnote{1}}                                                                       & \multicolumn{3}{c|}{VGG11}                                                                          & \multicolumn{3}{c|}{WRN-40-2}                                                                       & \multicolumn{3}{c}{WRN-40-2}                                                                       \\
    Student      & \multicolumn{3}{c|}{ResNet18}                                                                       & \multicolumn{3}{c|}{ResNet18}                                                                       & \multicolumn{3}{c|}{WRN-40-1}                                                                       & \multicolumn{3}{c}{WRN-16-2}                                                                       \\ \hline
    Metric       &  Acc@1 & Agree@1 & $L_p$          &  Acc@1 & Agree@1 & $L_p$    &  Acc@1 & Agree@1 & $L_p$  &  Acc@1 & Agree@1 & $L_p$                  \\ \hline
    Vanilla Teacher    & 78.05                              & 100.00                                   & 1.0000             & 71.32                              & 100.00                                   & 1.0000             & 75.83                              & 100.00                                   & 1.0000             & 75.83                              & 100.00                                   & 1.0000             \\
    Vanilla Student    & 73.24                              & 76.69                                    & 0.5836             & 73.24                              & 72.12                                    & 0.5665             & 64.87                              & 65.90                                    & 0.5160             & 66.61                              & 67.85                                    & 0.5394             \\ \hline
    DAFL\cite{chen2019data}         & 67.58                              &  74.82             & 0.5232             & 64.49                              & 72.82                                    & 0.4891             & 55.06                              & 58.83                                    & 0.3847             & 55.48                              & 58.85                                    & 0.3878             \\
    ZSKT\cite{micaelli2019zero}         & 56.49                              & 61.61                                    & 0.4379             & 59.83                              & 67.75                                    & 0.4935             & 44.35                              & 46.56                                    & 0.3046             &           50.98            &  54.74            &  0.3695          \\
    ADI\cite{yin2020dreaming}          & 69.13                              & 75.56                                    & 0.5604             & 68.21                              & 76.79                                    & 0.5949             & 52.25                              & 54.74                                    & 0.4029             &   51.54       &  54.48                                        &     0.3966              \\
    DFQ\cite{choi2020data}          & 73.56                              & 82.62                                    & 0.6173             & 69.37                              & 80.16                                    & 0.5699             & 63.62                              & 68.42                                    & 0.4366             & 64.02                              & 68.51                                    & 0.4396             \\
    CMI\cite{fang2021contrastive}          & 75.16                              & 83.46                                    & 0.6551             & 69.78                              & 78.54                                    & 0.6158             & {66.89}                     & 71.69                                    & \textbf{0.5525}    & 66.07                              & 70.01                                    & 0.5289    \\
    PRE-DFKD\cite{binici2022preventing}     & 75.63                  & 87.07                     & 0.6574 & 70.29                 & 85.15                       & 0.6632 & 55.70                 & 59.01                      & 0.4075 & 49.54                & 52.29                      & 0.3538 \\ \hline
    \textbf{CuDFKD(Ours)} & \textbf{75.98 }                             & \textbf{87.89}                           & \textbf{0.6815}    & \textbf{71.22}                     & \textbf{85.85}                           & \textbf{0.6777}    & \textbf{67.16}                              & \textbf{72.53}                           & 0.5381            & \textbf{66.26}                     & \textbf{71.44}                           & \textbf{0.5295}             \\ \hline
    \end{tabular}
    
    \begin{tablenotes}
    \item[1] The result is implemented by ourselves in the same framework with CMI, by the same hyperparameter as the original paper for the previous DFKD methods. Actually, we cannot achieve the reported result from the original paper in CMI, and we report the highest performance over all trials of each method to promise the fairness of this experiment. Specifically, CMI\cite{fang2021contrastive} and DFQ\cite{choi2020data} report about 77\% in their paper, but we only get about 73.56\%, 75.16\% in 300 epochs.
    
    \end{tablenotes}
    \end{threeparttable}
    
    \end{sidewaystable}

As for the robustness and fidelity of the DFKD framework, we use metrics reported by \cite{stanton2021does,mirzadeh2020improved}, i.e., 

\begin{itemize}
    \item \textbf{Average top-1 accuracy $Acc@1$}. It's widely used for the measurement of the learning performance of student accuracy. It's calculated by
    \begin{equation}
        Acc@1 = \frac 1n \sum_{i=1}^n \mathbb{1}(\arg \max_j z_{i, j}^s = y_i),
    \end{equation}
    \item \textbf{Average top-1 agreement $Agree@1$}\cite{stanton2021does}. It's proposed to measure the generalization of the KD framework, and it's calculated by
    \begin{equation}
        Agree@1 = \frac 1n \sum_{i=1}^n \mathbb{1}(y_i^s = y_i^t),
    \end{equation}
    where $y_i^s = \arg \max_j z_{i, j}^s$ and $y_i^t = \arg \max_j z_{i, j} ^ t$ are the pseudo label of teacher and student output logits, respectively.

    \item \textbf{Average Probability Loyalty $L_p$}\cite{mirzadeh2020improved}.  It's first proposed to measure the performance of model compression of large pretrained language models. The probability loyalty measures the robustness of the learned distribution by the student model. It's calculated by
    \begin{equation}
        L_{p}(P \| Q)=1-\sqrt{D_{\mathrm{JS}}(P \| Q)},
    \end{equation}
    where $D_{\mathrm{JS}}(P \| Q)$ is the JS divergence between the distributions of $P$ and $Q$.

\end{itemize}

Here, the commonly used metric $Acc@1$ in previous DFKD methods \cite{binici2022robust,fang2021contrastive,fang2019data,luo2020large} only evaluates the gap between the distribution of student logits $p(z^s)$ and the label distribution $p(y)$. However, as the pretrained teacher model is the sole training target in the DFKD setup (discussed in Section \ref{sec: intro}), the generalization error of DFKD is determined by the gaps between teacher and student outputs. Therefore, alternative metrics such as $Agree@1$ and $L_p$ can more accurately assess these gaps and provide better performance measurements.

\subsection{Results and Analysis}

In this section, we report the result and convergence of CuDFKD. In addition, we compare the performance and convergence of CuDFKD with previous DFKD methods.

\subsubsection{Results on Different Benchmarks}

Table \ref{table:main_result} and Table \ref{table:main_result2} present the performance of different benchmarks and teacher-student pairs. The labels \textit{Vanilla Teacher} and \textit{Vanilla Student} refer to the training accuracy of the teacher and student models, respectively, obtained by training them from scratch using the original training data. The metrics $Agree@1$ and $L_p$ for the teacher model are 100.00\% and 1.0000 respectively, according to their definitions. From Table \ref{table:main_result}, we observe that CuDFKD outperforms all the methods. Furthermore, memory-based methods such as CMI \cite{fang2021contrastive} and PRE-DFKD \cite{binici2022robust} require an extra memory bank or memory generator, while our CuDFKD only adds a scheduler for the training. For different teacher-student pairs in CIFAR10, we achieve the best overall performance metrics, as shown in Table \ref{table:main_result}. It is worth noting that we use the same hyperparameter settings and code implementation as the original paper by CMI\footnote{\href{https://github.com/zju-vipa/CMI}{https://github.com/zju-vipa/CMI}}.

\begin{figure*}[!htbp]
     \centering
     \begin{subfigure}[b]{0.23\textwidth}
         \centering
         \includegraphics[width=\textwidth]{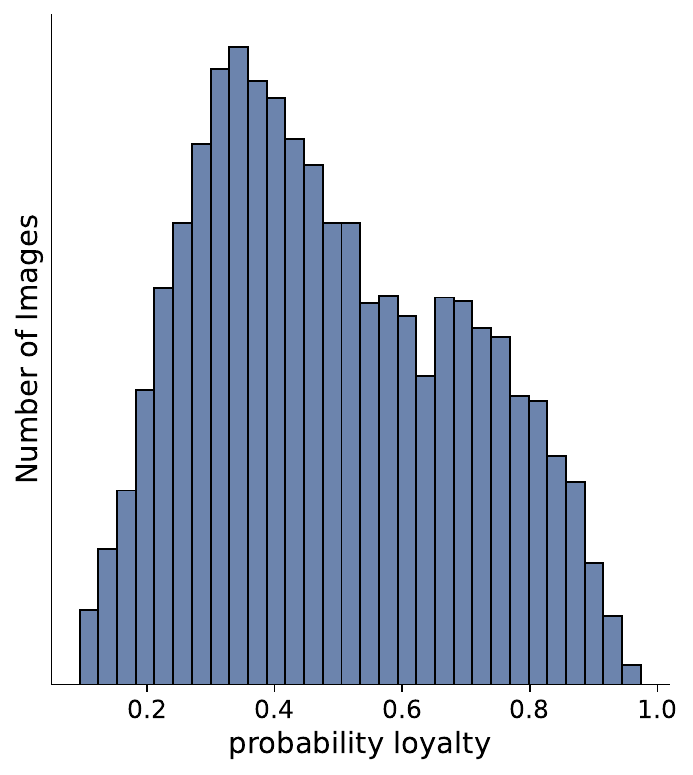}
         \caption{$L_p$ distribution of DAFL.}
         \label{fig:dafl loyalty}
     \end{subfigure}
     \hfill
     \begin{subfigure}[b]{0.23\textwidth}
         \centering
         \includegraphics[width=\textwidth]{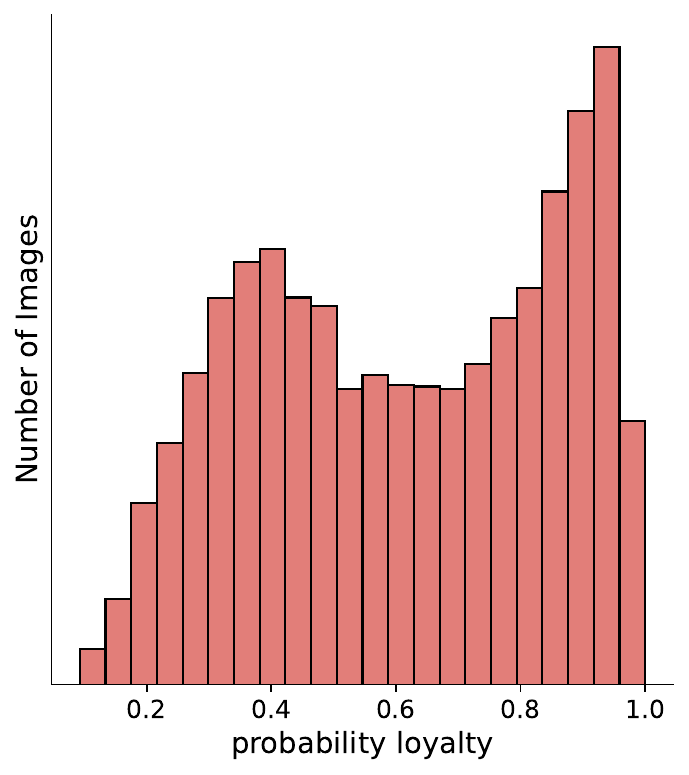}
         \caption{$L_p$ distribution of CMI.}
         \label{fig:cmi loyalty}
     \end{subfigure}
     \hfill
     \begin{subfigure}[b]{0.23\textwidth}
         \centering
         \includegraphics[width=\textwidth]{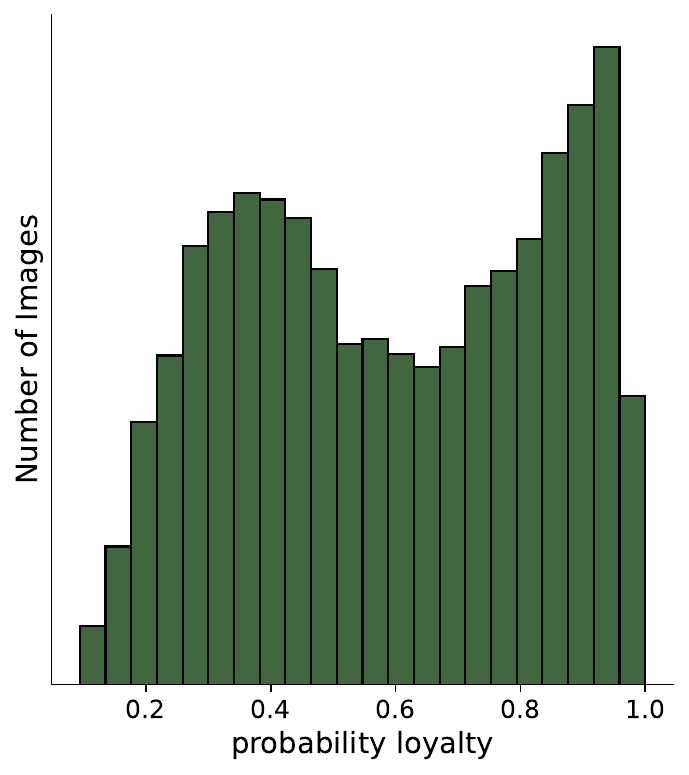}
         \caption{$L_p$ distribution of ADI.}
         \label{fig:deepinv loyalty}
     \end{subfigure}
     \hfill
    \begin{subfigure}[b]{0.23\textwidth}
         \centering
         \includegraphics[width=\textwidth]{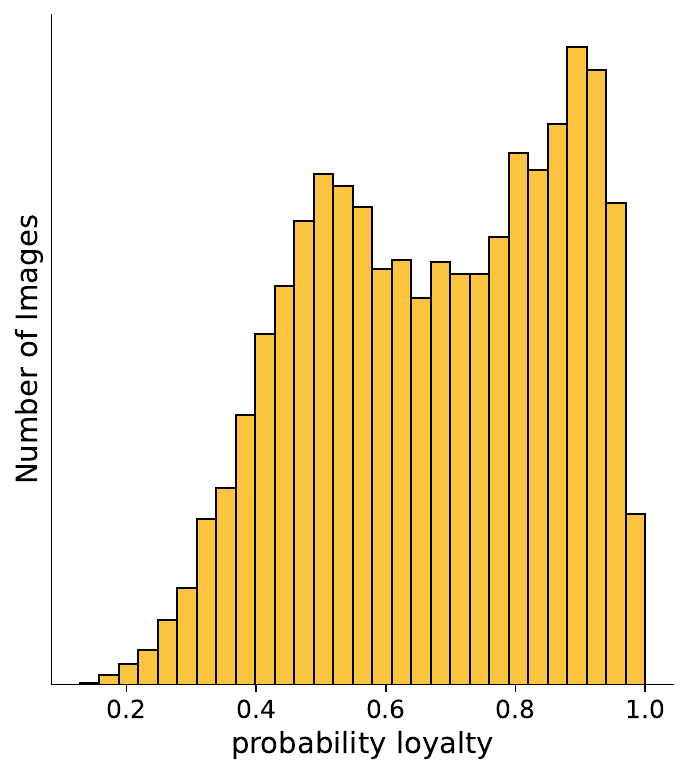}
         \caption{$L_p$ distribution of CuDFKD.}
         \label{fig:cudfkd loyalty}
     \end{subfigure}
        \caption{Distribution of probability loyalty $L_p$ for different DFKD methods. T: VGG11, S: ResNet18, Benchmark: CIFAR100. Better viewed in color.}
        \label{fig:loyalty distribution}
\end{figure*}

\begin{table}[!htbp]
\centering
\caption{The result of Tiny ImageNet. All results are achieved by our reimplementation\tnote{1}. Methods marked by * mean that the validation performance does not increase for a long time.}
\label{table: timgnet}
\begin{threeparttable}
\begin{tabular}{cccc}
\hline
Method  & Acc@1 & Agree@1 & $L_p$ \\ \hline
Van. T & 61.47  & 100.00 & 1.0000 \\
Van. S & 43.18  & 44.10 & 0.3502   \\ \hline
CMI*    &  11.65 $\pm$ 1.23   & 16.72 $\pm$ 1.33 & 0.2003 $\pm$ 0.0204 \\
ADI*     &   26.00 $\pm$ 0.87   & 36.98 $\pm$ 0.82 & 0.2736 $\pm$ 0.0185 \\ 
DFQ     &   41.30  & 47.07 & 0.3300 \\ \hline
CuDFKD  &   \textbf{43.42} & \textbf{50.07} & \textbf{0.3562} \\ \hline
\end{tabular}
\begin{tablenotes}
\footnotesize
\item[1] The result is achieved by the same hyperparameter as the CMI original paper in CMI code implementation. As the teacher is retrained in this paper, results are different from those reported in the original paper. For the fairness of two methods, we run 3 experiments and report the error for CMI and ADI.
\end{tablenotes}
\end{threeparttable}
\end{table}

For harder benchmarks, we also implement CuDFKD on Tiny ImageNet, and the result is presented in Table \ref{table: timgnet}. The larger image resolution of Tiny ImageNet from $32 \times 32$ to $64 \times 64$ required us to adjust the hyperparameters of the generator models from the original paper and freeze them. Additionally, we adjust the hyperparameters of the dynamic strategy further $\mathbf{v}^*$ and retrain the teacher model on the benchmark before performing CuDFKD. For comparison with other DFKD methods, we used the same generator architecture as CuDFKD. From Table \ref{table: timgnet}, we observe that CMI and ADI did not perform well in our implementation, while CuDFKD maintained stable and optimal performance, even in the face of more challenging benchmarks.

\subsubsection{Robustness at the evaluation sets} To evaluate the robustness and fidelity of CuDFKD, we compute two metrics, namely the average Top-1 agreement ($Agree@1$) and the average probability loyalty ($L_p$). Tables \ref{table:main_result} and \ref{table:main_result2} show the results, from which we make the following observations:

1) Both $Agree@1$ and $L_p$ metrics are positively correlated with $Acc@1$.

2) Among all the benchmarks and student-teacher pairs, CuDFKD achieves the best performance, which can be attributed to the full learning of the information in the teacher model, i.e., the full imitation of the teacher.

3) Although \textit{Vanilla Student} results in a more desirable $Acc@1$, its $Agree@1$ and $L_p$ values are not satisfactory.

Observation 1) implies that the performance of the student model in the DFKD task relies on the degree of imitation of the teacher model. Observation 2) explains that the main reason for the superior performance of CuDFKD is the adequate learning of the teacher model. Finally, observation 3) indicates that such approximation to teacher is gradually achieved during the training of the student model. The student model approaches the target distribution $p(y)$. By measuring $Agree@1$ and $L_p$, we emphasize the importance of approximating the distribution determined by the teacher model for DFKD training and thus present the superiority of our CuDFKD over previous methods.

\begin{figure*}[!htbp]
    \centering
    \begin{subfigure}[b]{0.47\textwidth}
        \centering
        \includegraphics[width=\textwidth]{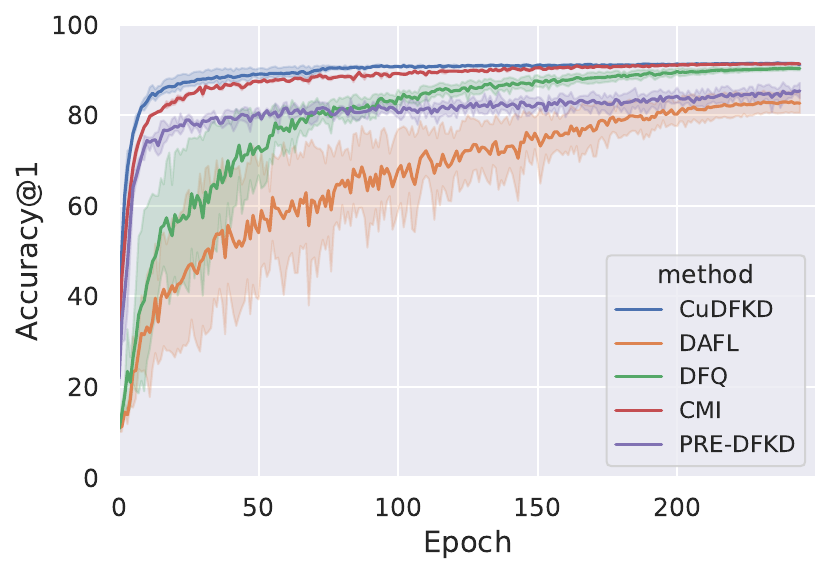}
        \caption{The convergence for CIFAR10, the running epochs is 250. T: VGG11, S: ResNet18.}
        \label{fig:cifar10 convergence}
    \end{subfigure}
    \hfill
    \begin{subfigure}[b]{0.47\textwidth}
        \centering
        \includegraphics[width=\textwidth]{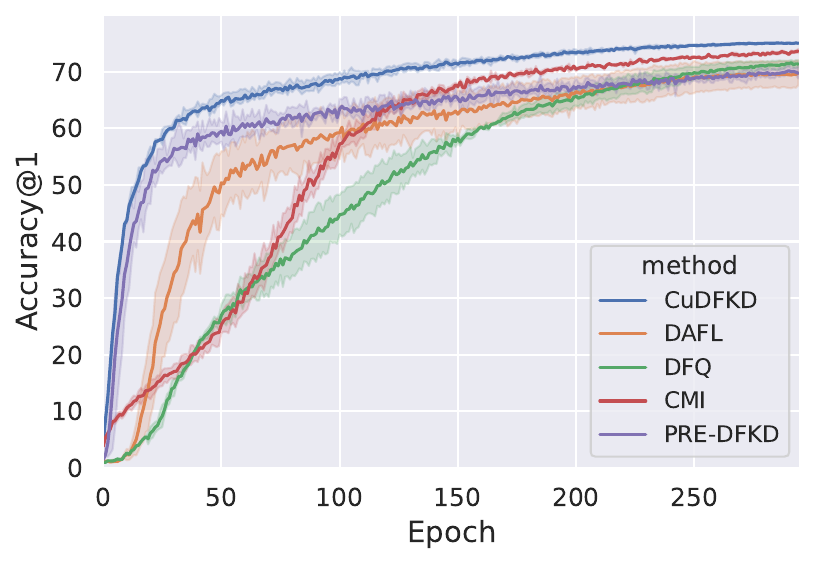}
        \caption{The convergence for CIFAR100, the running epochs is 300. T:ResNet34, S: ResNet18, }
        \label{fig:cifar100 convergence}
    \end{subfigure}
       \caption{Visualization of validation accuracy curves of different curves with different methods. The blue curve is our CuDFKD, respectively. Better viewed in color.}
       \label{fig:convergence}
\end{figure*}

To further investigate the generalization performance of DFKDs, we visualized the distribution of $L_p$ at the final epoch of ADI, CMI, and CuDFKD using {VGG11-ResNet18} pairs on CIFAR100. The resulting distribution, as shown in Figure \ref{fig:loyalty distribution}, demonstrates that CuDFKD achieves higher probabilistic loyalty compared to ADI, DAFL, and CMI on the CIFAR100 benchmark, with a clear rightward shift of the first wave peak. However, CMI generates more low $L_p$ samples than higher one, leading to a lower mean $L_p$. It is expected since ADI and DAFL introduces a regularization term to the generator's loss function $L_g$ to increase the diversity of the pseudo-samples, which trades off similarity to the teachers for the generation of diverse samples.

For CuDFKD, the difficulty function is designed based on the disagreement between teachers and students, and it changes dynamically with the training process. Therefore, it is more inclined towards the imitation and learning process for the teacher, resulting in higher $Agree@1$ and $L_p$ performance. The performance of $Agree@1$ and $L_p$ supports our hypothesis that $\mathcal{L}_{gb, st}$ is low, while high $Acc@1$ supports the hypothesis that $\mathcal{L}_{gb, st} + \mathcal{L}_{gb, tr}$ is low, as shown in Equ. \ref{equ: sr}. In conclusion, CuDFKD achieves comparable results among SOTA methods because it better explores the global minimum during the easy-to-hard training strategy. Note that these results were achieved at the best checkpoint during training, which we refer to as the \textbf{peak performance}. We also evaluated the convergence and performance of CuDFKD during the training stage.

\subsubsection{Convergence of CuDFKD}\label{sec: conv}

In this section, we check the convergence of CuDFKD and compare it with other SOTA DFKD methods by plotting the curve of validation accuracy at each timestamp during training. We use VGG11-ResNet18 and ResNet34-ResNet18 as teacher-student pairs. Hyperparameters are those specified in the original papers of each method. The metric used for this experiment is $Acc@1$, and the same curriculum strategy as in Table \ref{table:main_result} is used. Figure \ref{fig:convergence} show that CuDFKD converges quickly and efficiently, achieving a 90\% top-1 accuracy on CIFAR10 in less than 15 epochs and stably improving until the end of training. Moreover, the training process is stable, as the validation accuracy does not fluctuate significantly during the training process. In contrast, previous DFKD methods exhibit different stability, making it difficult to achieve optimal performance.

\begin{table*}[!htbp]
\centering
\caption{The stability for the training of ResNet18 taught by ResNet34 on two CIFAR benchmarks. Here the $\mu$ and $\sigma^2$ achieved by CuDFKD are performed by 4 runs. The memory usage and training time tests are all performed on one NVIDIA 3090 GPU, with batch size of 256. Here we use $Acc@1$ as the metric. }
\label{table: conv}
\begin{tabular}{c|ccccc|ccccc}
\hline
\multirow{2}{*}{DataSet} & \multicolumn{5}{c|}{CIFAR10} & \multicolumn{5}{c}{CIFAR100} \\ \cline{2-11} 
 & $\mu$  & $\sigma^2$  & best Acc@1 & GPU Memory  & Time & $\mu$  & $\sigma^2$  & best Acc@1 & GPU Memory & Time \\ \hline
DAFL        &   62.6   &  17.1     &  92.0     &\textbf{ 6.45G } & 6.10h &  52.5 &  12.8 &  74.5  &  \textbf{6.45G}  & \textbf{7.09h}      \\
DFAD\cite{fang2019data}      &    86.1    &   12.3     &  93.3     & -   &    - & 54.9 & 12.9 &  67.7    & -   &    -   \\
ADI\cite{yin2020dreaming}   &   87.2     & 13.9    &   93.3    &  7.85G   &   25.2h & 51.3  & 18.2  &  61.3   & 7.85G   &   30.4h    \\
CMI\cite{fang2021contrastive}      &   82.4     & 16.6  &   94.8    &    12.5G   & 13.3h & 55.2 & 24.1 &  77.0  &     12.5G        &   22.3h   \\
MB-DFKD\cite{binici2022preventing}      &  83.3  & 16.4  &  92.4   &   - &    -   &   64.4     &   18.3    &     75.4   &      -       &  -     \\
PRE-DFKD\cite{binici2022robust}    & 87.4 & 10.3  &  94.1  &  -    &  -      &    70.2    &    11.1   &  \textbf{ 77.1 }    &       -      &   -    \\ \hline
CuDFKD(Ours)      &   \textbf{94.1}     &    \textbf{2.88}    & \textbf{95.0} & 6.84G  & \textbf{5.48h}  &  \textbf{71.7}      & \textbf{4.37}  & 75.2   & 6.84G   &    7.50h   \\ \hline
\end{tabular}
\end{table*}

Quantitatively, we assess the stability of our framework and compare it with other SOTA DFKD methods by computing the mean $\mu[s_{acc}]$ and variance $\sigma^2[s_{acc}]$ of the validation student accuracy over all epochs, as motivated by PRE-DFKD \cite{binici2022robust}. Specifically, we calculate $\mu[s_{acc}]$ and $\sigma^2[s_{acc}]$ as the mean and variance of the validation accuracy from the 80th epoch to the end of the training, as all methods tend to converge from this epoch. Moreover, we record the peak performance during training, the used GPU memory, and training time. Table \ref{table: conv} presents the $\mu[s_{acc}]$ and $\sigma^2[s_{acc}]$ for all SOTA methods. Our CuDFKD shows well stability during DFKD training without requiring any memory bank or replay generators, achieving higher $\mu$ than all other methods. Furthermore, the memory usage and time cost indicate that CuDFKD uses the same memory as the simplest DFKD method as DAFL while achieving significantly better performance.

\begin{figure*}[!htbp]
    \centering
    \begin{subfigure}[b]{0.2 \textwidth}
        \centering
        \includegraphics[width=\textwidth]{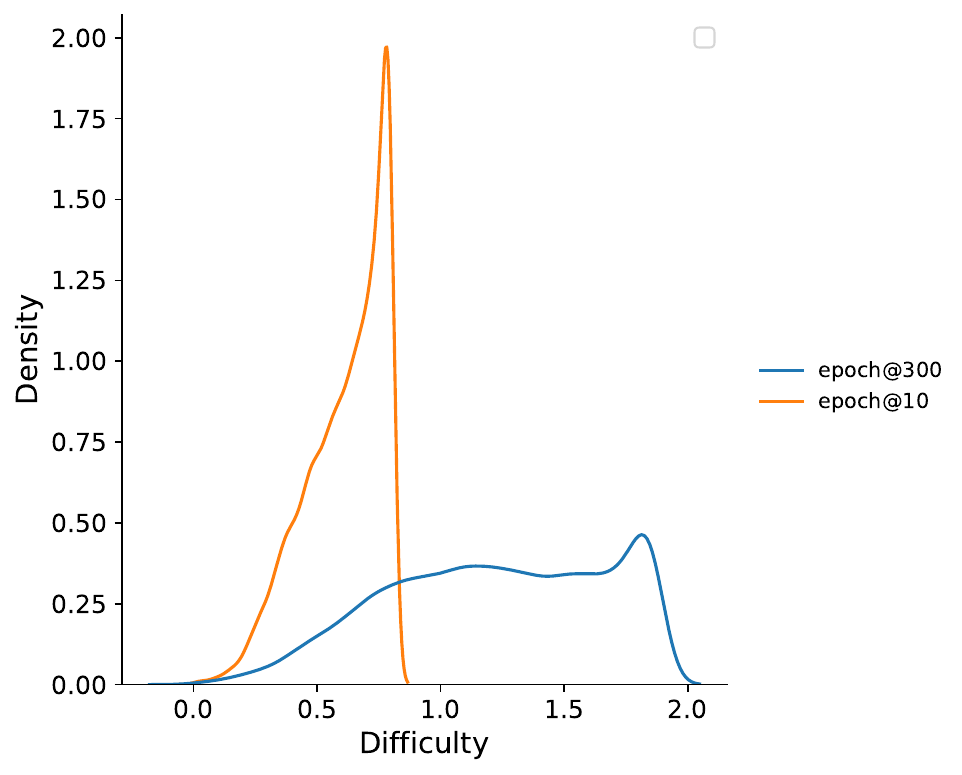}
        \caption{Distribution of difficulty at different timestamps.}
        \label{fig:diff}
    \end{subfigure}
    \hfill
    \begin{subfigure}[b]{0.37 \textwidth}
        \centering
        \includegraphics[width=\textwidth]{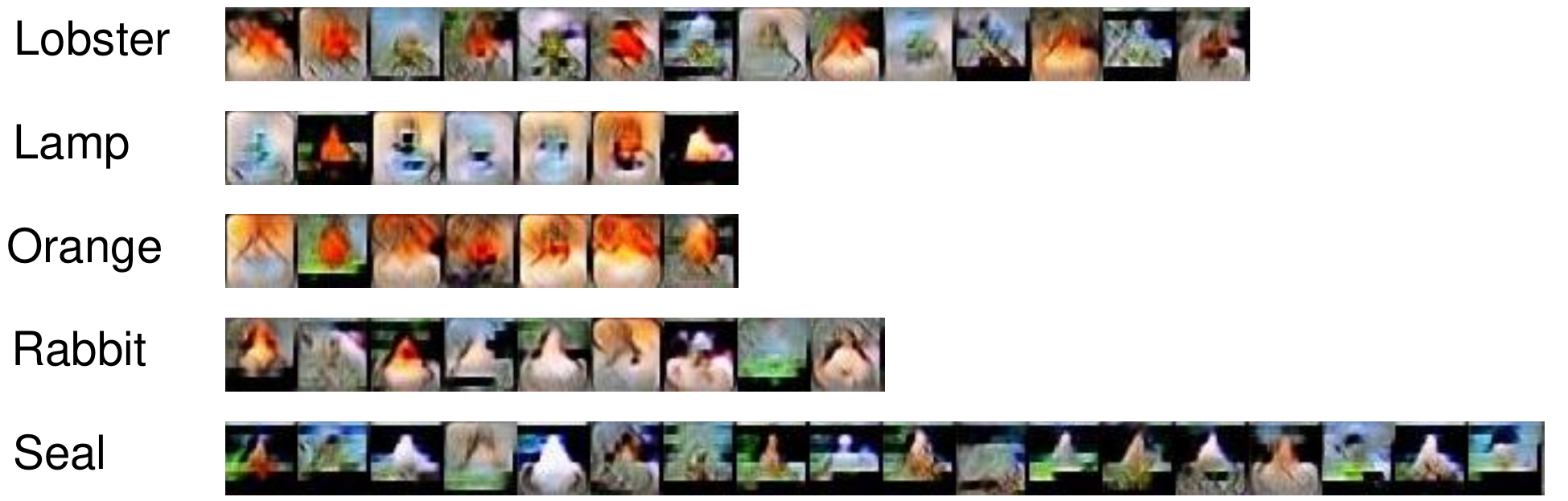}
        \caption{Visualization of generated samples by the class at epoch 10.}
        \label{fig:epoch at 10}
    \end{subfigure}
    \hfill
   \begin{subfigure}[b]{0.37\textwidth}
        \centering
        \includegraphics[width=\textwidth]{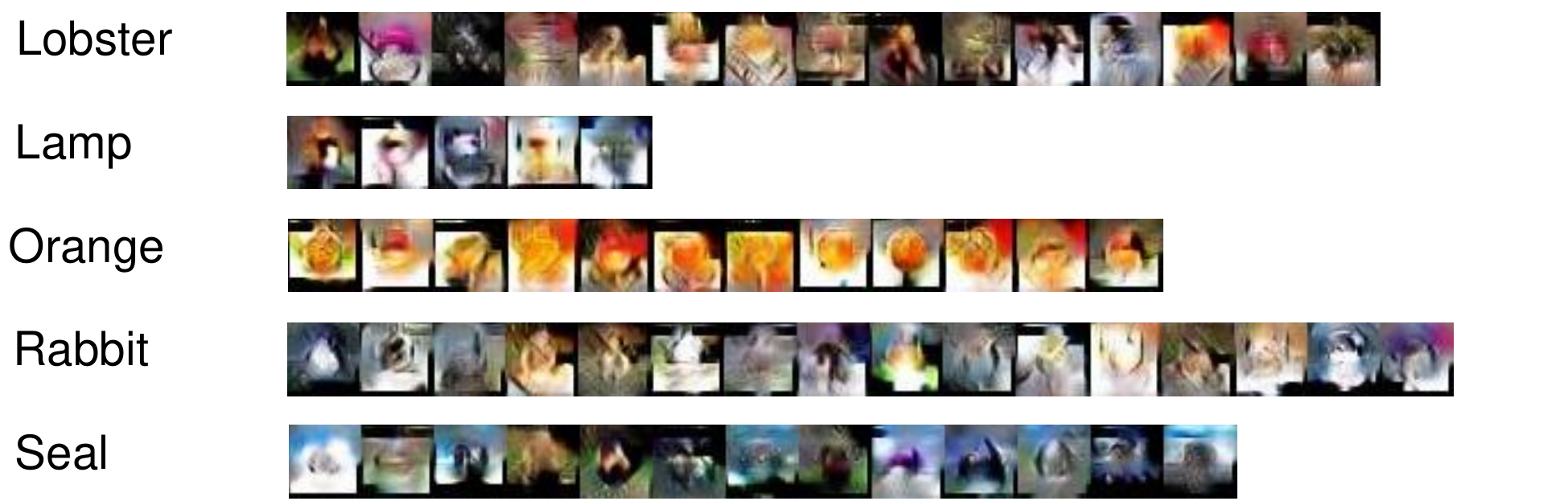}
        \caption{Visualization of generated samples by the class at epoch 300.}
        \label{fig:epoch at 300}
    \end{subfigure}
       \caption{Visualization of the distribution of difficulty at different stages while training CuDFKD. T: ResNet34, S: ResNet18, Benchmark: CIFAR100. Better viewed in color.}
       \label{fig:difficulty visualization}

\end{figure*}

\begin{figure*}[!htbp]
    \centering
    \begin{subfigure}[b]{0.32\textwidth}
        \centering
        \includegraphics[width=\textwidth]{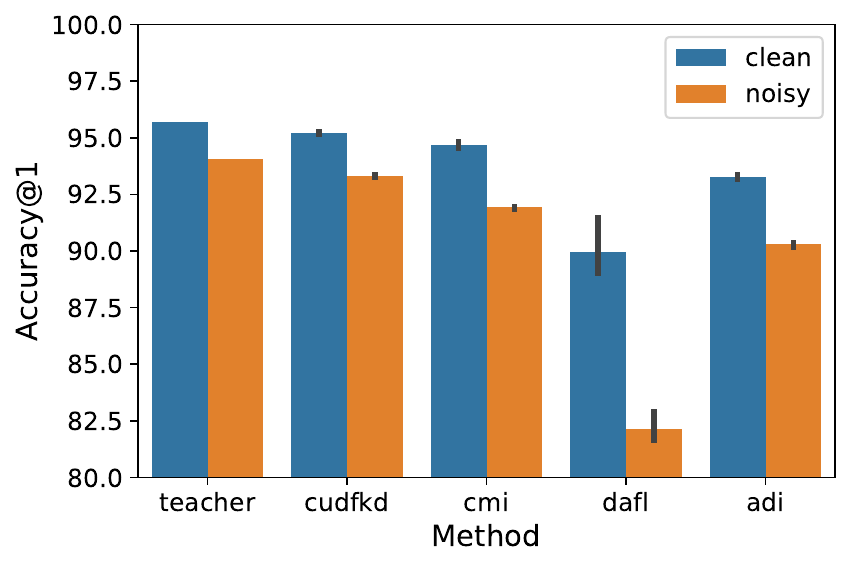}
        \caption{$Acc@1$.}
        \label{fig:acc1}
    \end{subfigure}
    \hfill
    \begin{subfigure}[b]{0.32\textwidth}
        \centering
        \includegraphics[width=\textwidth]{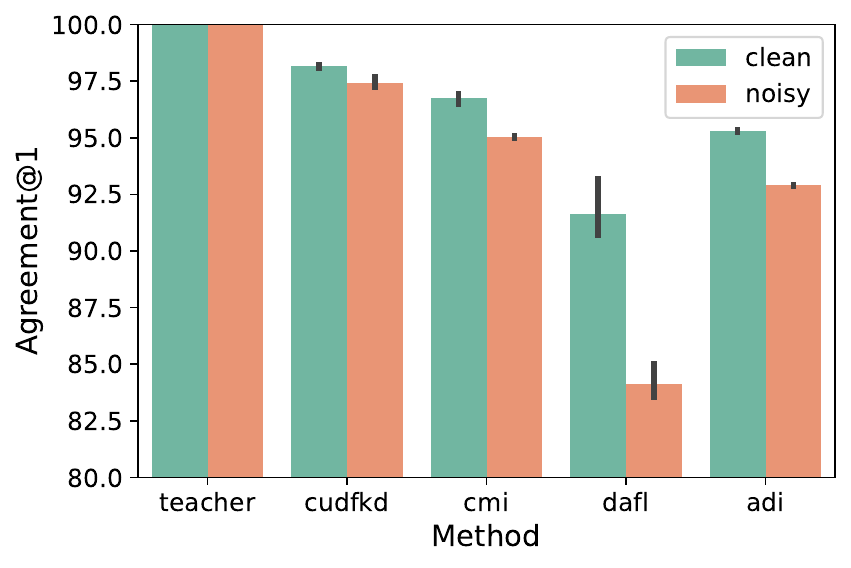}
        \caption{$Agree@1$.}
        \label{fig:agree1}
    \end{subfigure}
    \hfill
   \begin{subfigure}[b]{0.32\textwidth}
        \centering
        \includegraphics[width=\textwidth]{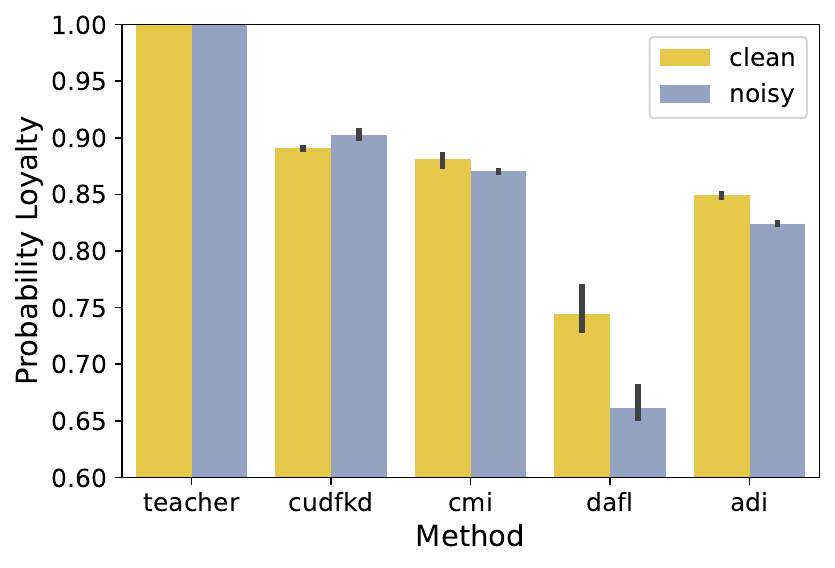}
        \caption{$L_p$.}
        \label{fig:loyalty}
    \end{subfigure}
       \caption{Performance of using clean and noisy teacher for training CuDFKD on different metrics. T: ResNet34, S: ResNet18, Benchmark: CIFAR10. Better viewed in color.}
       \label{fig: noisy}
\end{figure*}



\subsection{Difficulty Visualization at Different Stages}

In CuDFKD, we utilize the divergence between the teacher and student models to define the difficulty level for each training sample. It is important to ensure that the easy-to-hard criterion is satisfied during training. To investigate the impact of this design on student model training, we visualize the sample difficulty distribution at different training stages, along with the corresponding images. We focus on visualizing image $\mathbf{X}$ as the difficulty, consistent with the definition of SPL. We also compute the difficulty of the generated samples by finding the label $y_t$ with the highest score: $y_t = \arg \max_{i} z_{i}^t$. The experiments are conducted on CIFAR100, with WRN-40-2 as the teacher and WRN-16-2 as the student, using a batch size of 1024.

The visualized samples are presented in Figure \ref{fig:difficulty visualization}. Figure \ref{fig:diff} shows the visualization results of the difficulty distribution for the 10th epoch (distribution B) as well as for the 300th epoch (distribution A). Figures \ref{fig:epoch at 10} and \ref{fig:epoch at 300} show the visualization results of the same category of pseudo-samples for the 10th epoch (Figure \ref{fig:epoch at 10}) as well as for the 300th epoch (Figure \ref{fig:epoch at 300}). We observe sharper peaks in distribution A at the beginning of training compared to distribution B, and the peaks in A are more to the left than those in B. These sharp peaks indicate that the samples are not sufficiently diverse at the beginning of the training phase, and therefore the visualization is not as effective. Similar results are obtained from the sample visualization results in Figures \ref{fig:epoch at 10} and \ref{fig:epoch at 300}. For Figure \ref{fig:epoch at 10}, the samples between different classes already possess a certain gap, however, the intra-class samples are more similar. This setting for the samples can effectively help students learn quickly from the teacher model and achieve the learning process from easy to difficult, as explained in Section \ref{sec: intro} and Figure \ref{fig:workflow}.

\subsection{Comparable results on Noisy Teachers} 
In this section, we investigate the robustness of CuDFKD in learning from a noisy teacher model. They create the noisy teacher model by training a ResNet34 model for only 200 epochs with an initial learning rate of 1.0 using a cosine scheduler and without finetuning other parameters. The teacher model achieves 94.07\% top-1 accuracy but contains some incorrect classification information, which introduces noise into the distillation process. The experiments are conducted with a batch size of 768, and the results are presented in Figure \ref{fig: noisy}. In addition, we run three experiments with random seeds $\{0, 10, 20\}$ and report the error bars in the figure for all three metrics.

From Figure \ref{fig: noisy}, our CuDFKD model achieves more than 93.3\% $Acc@1$ with the guidance of the noisy teacher, experiencing less than a 0.7 percentage drop compared to the teacher model. Moreover, CuDFKD improves the $L_p$ of student guided by the noisy teacher, indicating its robustness against noise in the teacher. Such robustness is attributed to the dynamic reweighting of generated pseudo samples, which provides another form of supervision for the student. 

Additionally, we also have some interesting discoveries from Figure \ref{fig: noisy}. First, DAFL suffers significant drops in all metrics due to its failure to learn adversarial samples, resulting in its lack of robustness. Second, CuDFKD and CMI perform better than ADI and DAFL in terms of the drop in accuracy, as the memory in CMI improve the interaction between the teacher and student. Last, CuDFKD reports less variance than CMI across different random seeds and even better $L_P$ when using the noisy teacher model, indicating that a good dynamic strategy can improve the efficiency of knowledge transfer and align the output distribution from the teacher and student models. Therefore, training DFKD on the noisy teacher model presents an interesting area for future research.

\subsection{Ablation Study on Dynamic Modules}
This section investigates the impact of different dynamic modules in CuDFKD. As discussed in section \ref{sec: theo}, CuDFKD employs a dynamic learning strategy in the generation and training stages. Specifically, we adjust the hyperparameter $\alpha_{adv}$ to control the generation target and the reweighting factor $\mathbf{v}$ to maintain the target of KD. Table \ref{table: abl} presents the performance of CuDFKD with and without different dynamic modules. To assess the convergence rate, we measure the time it takes for the student model to achieve 90\% validation $Acc@1$, denoted as $t_{90}$.

\begin{table}[!htbp]
    \centering
    \caption{Ablation study of different components of CuDFKD. If adding the reweighting factor $\mathbf{v}$, we mark $\checkmark$ in the $\mathbf{v}$ column. The last column $t_{90}$ represents the cost of GPU hours for the student model to reach 90\% top-1 validation accuracy. T: ResNet34, S: ResNet18, Benchmark: CIFAR10. }
    \label{table: abl}
    \begin{tabular}{ll|llll}
    \hline
    $\mathbf{v}$ & $\alpha_{adv}(\tau)$                         & Acc@1 & Agree@1 & $L_p$  & $t_{90}$ \\ \hline & 1.0                                         & 94.61                        & 97.07                          & 0.8492 & 3.98h    \\
    $\checkmark$ & 1.0                                    & 94.86                        & 97.59                          & 0.8805 & 1.21h    \\
        & Equ \ref{equ: grad_adv} & 94.79 & 97.51 & 0.8806 & 3.28h \\

    $\checkmark$ & Equ \ref{equ: grad_adv} & \textbf{95.28}                        & \textbf{98.20}                          & \textbf{0.8915} & \textbf{0.71h}    \\ \hline
    \end{tabular}
    \end{table}

Table \ref{table: abl} presents the performance of CuDFKD with different dynamic modules. Compared with DFQ\cite{choi2020data} (line 1), adding the reweighting factor $\mathbf{v}$ (line 2) leads to significant improvements in $Agree@1$ and $L_p$. It presents that t$\mathbf{v}$ helps the student model approach the teacher model. Furthermore, the convergence rate of DFKD significantly improves as $t_{90}$ decreases from nearly 4 GPU hours to 1.21 GPU hours. Using the only dynamic generation target by adjusting $\alpha_{adv}$ (line 3) results in significant improvements in $Agree@1$ and $L_p$ but with a slight decrease in $t_{90}$. Finally, combining both dynamic modules (line 4) leads to substantial improvements in all metrics, with an even greater decrease in $t_{90}$. We conclude that the dynamic generation target by $\alpha_{adv}(\tau)$ can help the reweighting factor $\mathbf{v}$ find the global minima of the KD loss function $\mathbf{L}$ faster. Therefore, both designed dynamic modules are crucial for CuDFKD's performance.


\begin{figure}[!htbp]
    \centering
    \begin{subfigure}[b]{0.245\textwidth}
        \centering
        \includegraphics[width=\textwidth]{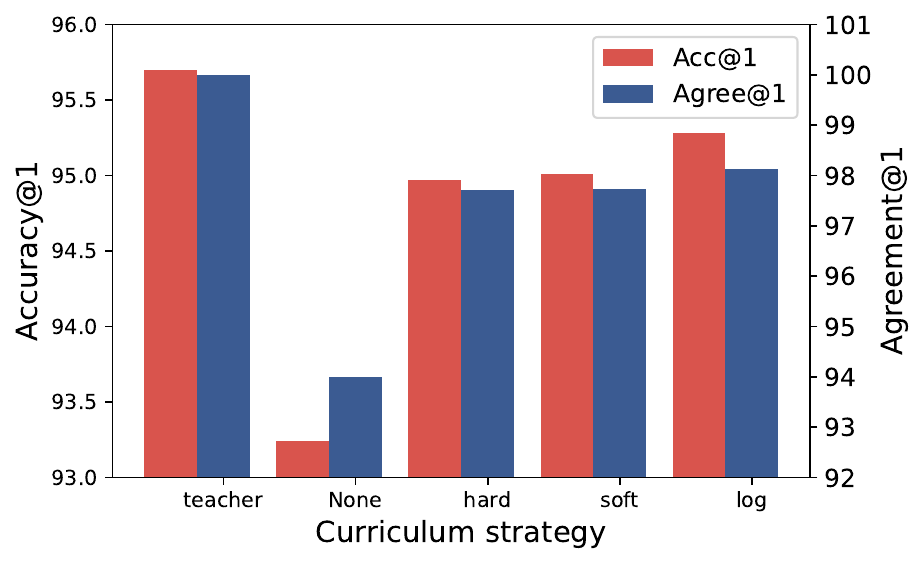}
        \caption{$Acc@1$ and $Agree@1$ with different settings of $\mathbf{v}^*$s.}
        \label{fig:strategy_metric1}
    \end{subfigure}
    \hfill
    \begin{subfigure}[b]{0.225\textwidth}
        \centering
        \includegraphics[width=\textwidth]{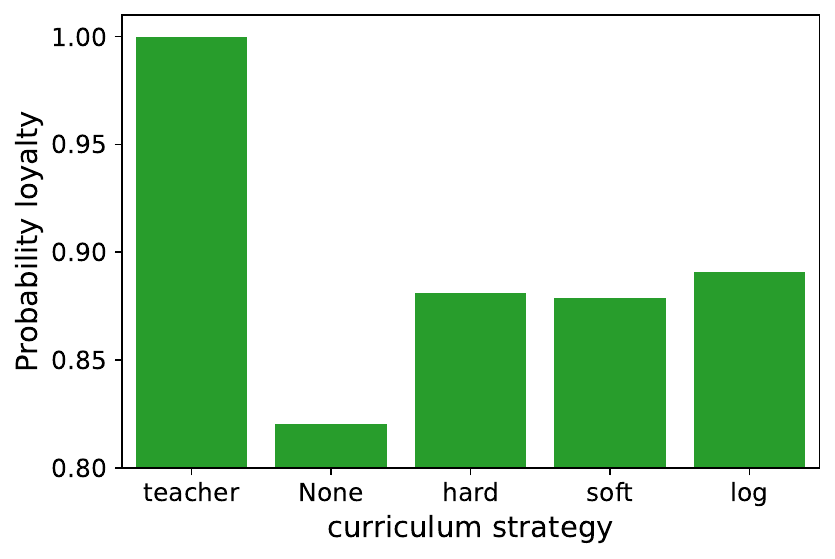}
        \caption{$L_p$ with different settings of $\mathbf{v}^*$s.}
        \label{fig:strategy_metric2}
    \end{subfigure}
       \caption{The performance of CuDFKD with different settings of dynamic strategy. Teacher: ResNet34, Student: ResNet18, Benchmark: CIFAR10. Better viewed in color.}
       \label{fig:strategy}
\end{figure}

\begin{figure}[!htbp]
    \centering
    \begin{subfigure}[b]{0.25\textwidth}
        \centering
        \includegraphics[width=\textwidth]{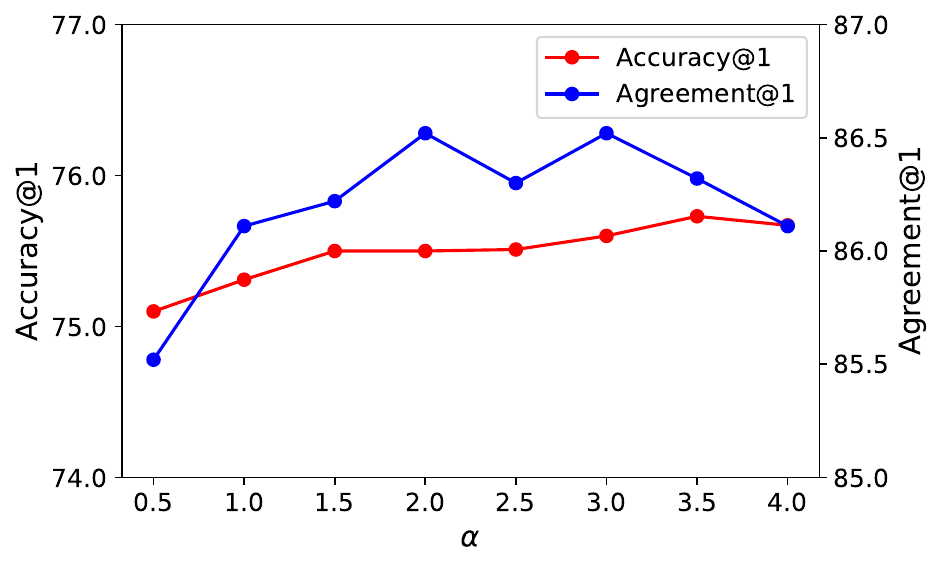}
        \caption{$Acc@1$ and $Agree@1$ with different $\alpha$s. }
        \label{fig:strategy_metrice}
    \end{subfigure}
    \hfill
    \begin{subfigure}[b]{0.225\textwidth}
        \centering
        \includegraphics[width=\textwidth]{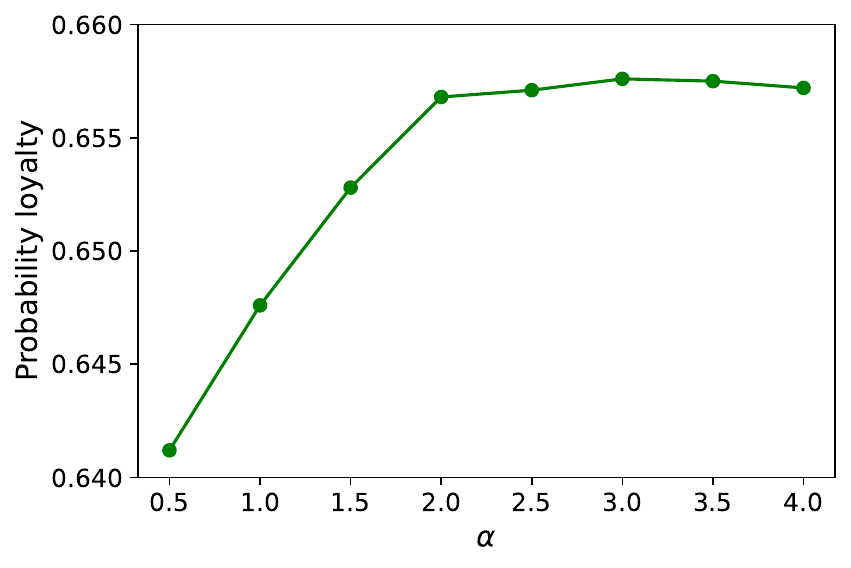}
        \caption{$L_p$ with different settings of $\alpha$s.}
        \label{fig:strategy_metricf}
    \end{subfigure}
       \caption{The performance of CuDFKD with different settings of $\alpha$. Teacher: ResNet34, Student: ResNet18, Benchmark: CIFAR100. Better viewed in color.}
       \label{fig:alpha}
\end{figure}

\subsection{Hyperparmeter Sensitivity}\label{subsec: hyper}

In this section, we check the effectiveness of CuDFKD and compare the performance of different hyperparameters for a given design of $\mathbf{v}^*$ and $\mathcal{L}_\theta(x)$. Furthermore, we explore various hyperparameters for the dynamic modules at different stages.

\textbf{Effect of dynamic training module, $\mathbf{v}^*$.} Specifically, we check different designs of $\mathbf{v}^*$. As it has the analytical solution, we design different closed forms of $\mathbf{v}^*$s by previous work\cite{wang2021survey}, i.e.,

\begin{itemize}
    \item \textbf{Hard} : $\mathbf{v}^*(
    \lambda, \mathbf{L}) = \mathbf{1}(\mathbf{L} < \lambda)$;
    \item \textbf{Soft} : $\mathbf{v}^*(
    \lambda, \mathbf{L}) = \mathbf{1}(\mathbf{L} < \lambda)(1-\mathbf{L} / \lambda)$;
    \item \textbf{Logarithm} : $\mathbf{v}^*(
    \lambda, \mathbf{L}) = \frac{1+e^{-\lambda}}{1+e^{\mathbf{L}-\lambda}}$,
\end{itemize}

The function $\mathbf{1}()$ represents the indicator function. The remaining hyperparameters are kept consistent with those used in previous experiments. All experiments are implemented on the distillation ResNet34-ResNet18 in CIFAR10, and they're summarized in Figure \ref{fig:strategy}. 

The findings meet with those of prior studies on curriculum learning \cite{wang2021survey}. As illustrated in Figure \ref{fig:strategy}, we observe the following: 1) Regardless of the dynamic strategy employed, the $Acc@1$ of the student models shows improvements. 2) Notably, adopting a softer dynamic strategy, such as the logarithmic scheduler, significantly improves learning performance. It's attributed to the logarithmic scheduler's ability to circumvent hard threshold-like indicator functions, which, in turn, provide a continuous and differentiable optimization objective for the student network. Such a continuous setting enables more efficient discovery of global minimum solutions.

\begin{table}[!htbp]
    \centering
    \caption{Ablation study of $\alpha_{adv}(\tau)$. $\infty$ represents we do not add $\mathcal{L}_{\theta}(x)$ in Equ. \ref{equ: dfkd}. T: ResNet34, S: ResNet18, Benchmark: CIFAR10.}
    \label{table: adv}
    \begin{tabular}{c|ccc}
    \hline
    $\alpha_{adv}(\tau)$                         & {Acc@1} & { Agree@1} & $L_p$  \\ \hline
    0                                         & 93.63                        & 95.98                          & 0.8462 \\
    $\infty$                                  & 94.92                        & 97.64                          & 0.8823 \\
    Equ \ref{equ: grad_adv} & \textbf{95.28}                        & \textbf{98.20}                          & \textbf{0.8915} \\ \hline
    \end{tabular}
    \end{table}

\begin{figure}[!htbp]
    \centering
    \begin{subfigure}[b]{0.25\textwidth}
        \centering
        \includegraphics[width=\textwidth]{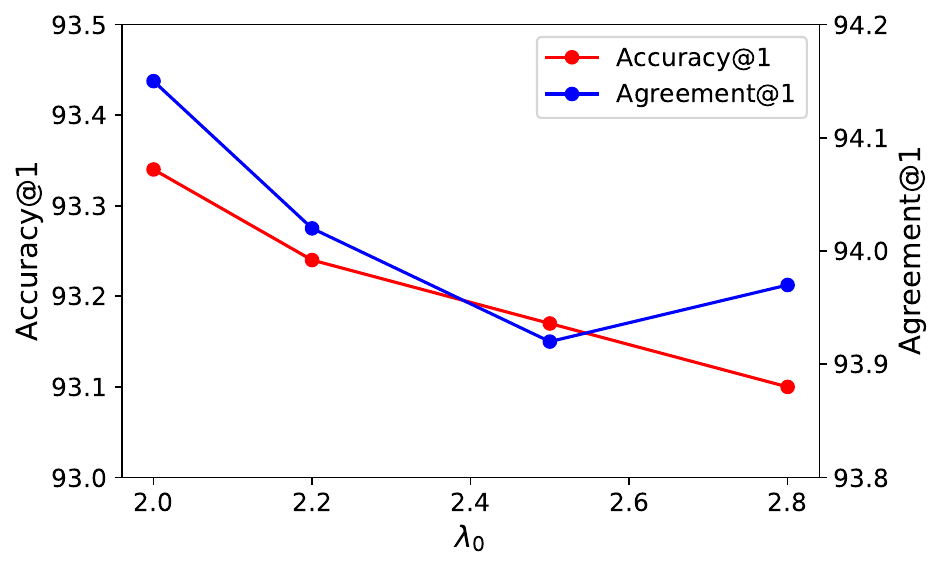}
        \caption{$Acc@1$ and $Agree@1$ with different $\lambda_0$s. }
        \label{fig:strategy_metricc}
    \end{subfigure}
    \hfill
    \begin{subfigure}[b]{0.225\textwidth}
        \centering
        \includegraphics[width=\textwidth]{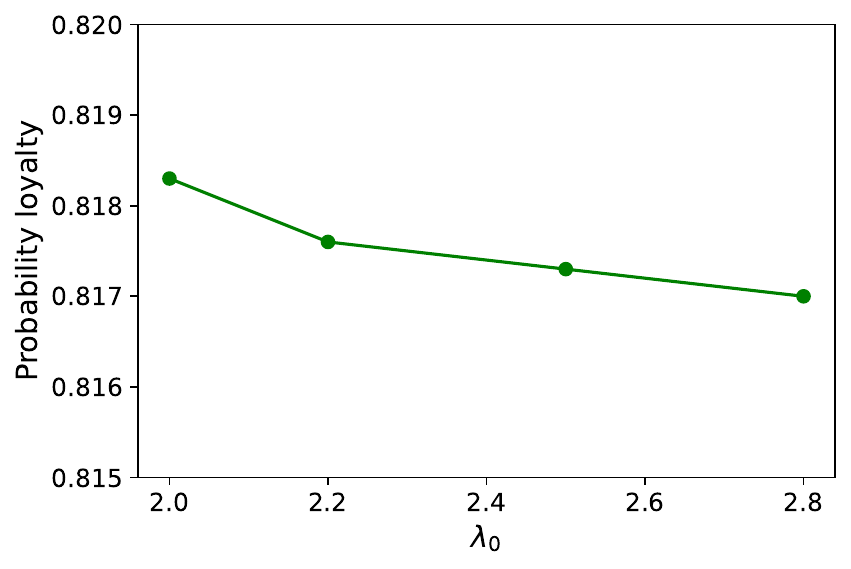}
        \caption{$L_p$ with different settings of $\lambda_0$.}
        \label{fig:strategy_metricd}
    \end{subfigure}
       \caption{The performance of CuDFKD with different settings of $\lambda_0$. Teacher: Noisy ResNet34, Student: ResNet18, Benchmark: CIFAR10. Better viewed in color.}
       \label{fig:lambda_0}
\end{figure}

\textbf{Effect of $\lambda_0$.} We investigate the effect of hyperparameter $\lambda(\tau)$ on the actual dynamic strategy here as discussed in Section \ref{sec: theo}. Choosing a proper $\lambda_0 = \lambda(0)$ is crucial as it affects the initial reweighting of generated pseudo-samples and early student training. Specifically, a large $\lambda_0$ leads to overreweighting, where almost all the generated samples are included in the training. Besides, the SP-regularizer $g(\mathbf{v}; \lambda)$ is close to zero, resulting in a collapse to the same training process as previous DFKD methods. On the other hand, a small $\lambda_0$ leads to underreweighting, where almost no samples are included in the training. Most of the contribution to the objective function $\mathbf{F}$ is from $g(\mathbf{v}; \lambda)$, causing no optimization of the student model.

Practically, we set $\lambda_0$ from $\{2.0, 2.2, 2.5, 2.8\}$, as choosing a value less than 2.0 results in underreweighting and divergence, while a value greater than 3.0 results in overreweighting. The experiments use the logarithm strategy for $\mathbf{v}^*$, with a noisy ResNet34 as the teacher and ResNet18 as the student. The noisy teacher model is used because we solely explore the hyperparameter sensitivity of the dynamic module. The results in Figure \ref{fig:lambda_0} indicate that $\lambda_0$ is slightly negatively correlated with the final performances. Besides, all the performances achieve more than 93\% $Acc@1$, which is better than other SOTA methods. For the $Agree@1$ and probability loyalty $L_p$ metrics, a similar phenomenon is observed as $Acc@1$. Obviously, using a proper interval of $\lambda_0$ leads to ideal performances for DFKD.

\begin{figure*}[!htbp]
    \centering
    \begin{subfigure}[b]{0.31 \textwidth}
        \centering
        \includegraphics[width=\textwidth]{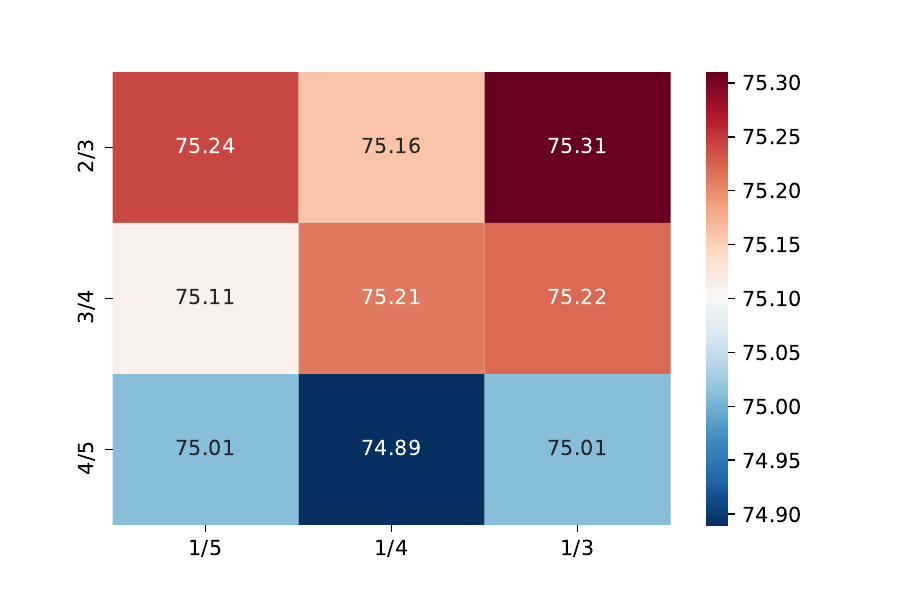}
        \caption{Acc@1, Res34 $\rightarrow$ Res18.}
        \label{fig:my_label1}
    \end{subfigure}
    \begin{subfigure}[b]{0.31 \textwidth}
        \centering
        \includegraphics[width=\textwidth]{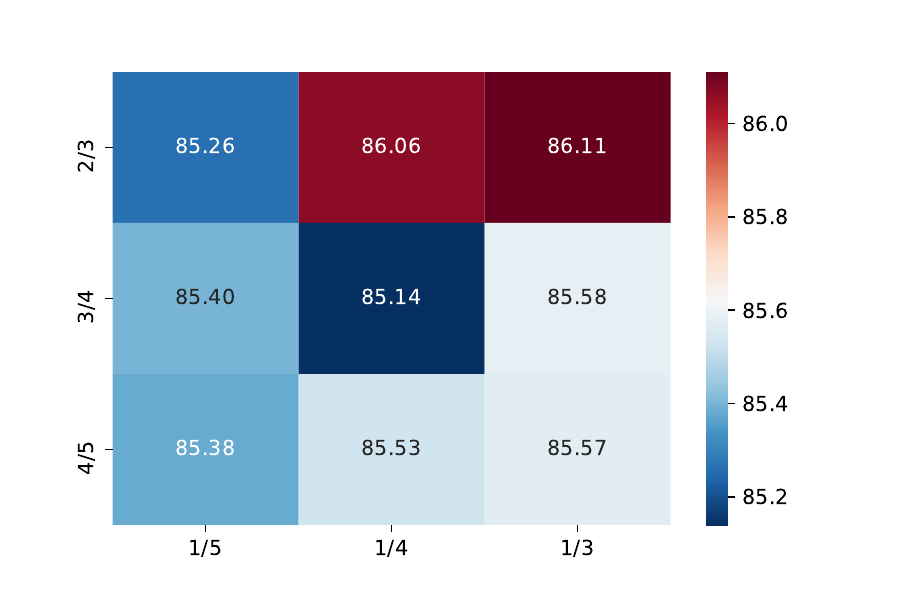}
        \caption{Agree@1, Res34 $\rightarrow$ Res18.}
        \label{fig:my_label2}
    \end{subfigure}
    \begin{subfigure}[b]{0.31 \textwidth}
        \centering
        \includegraphics[width=\textwidth]{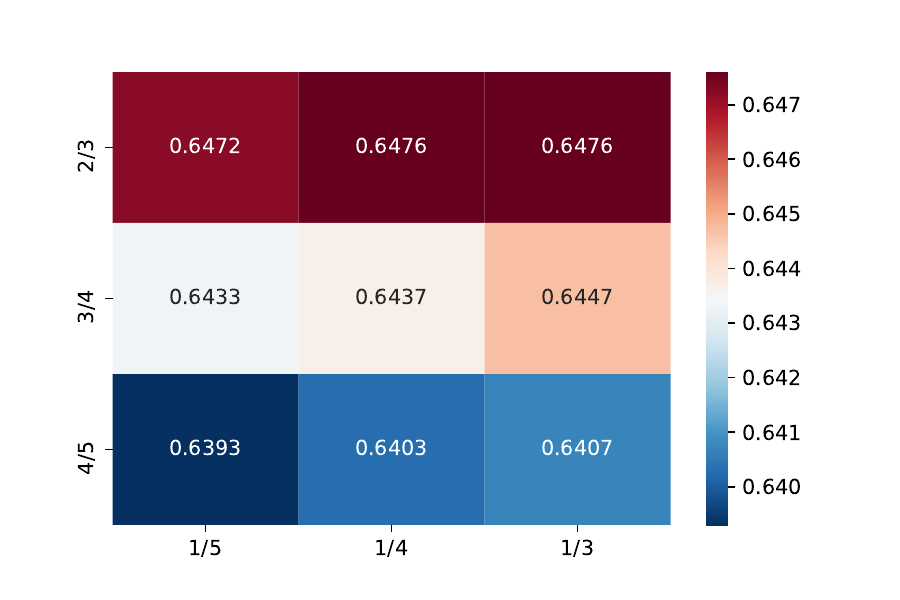}
        \caption{$L_p$, Res34 $\rightarrow$ Res18.}
        \label{fig:my_label3}
    \end{subfigure}
    \\
    \begin{subfigure}[b]{0.31 \textwidth}
        \centering
        \includegraphics[width=\textwidth]{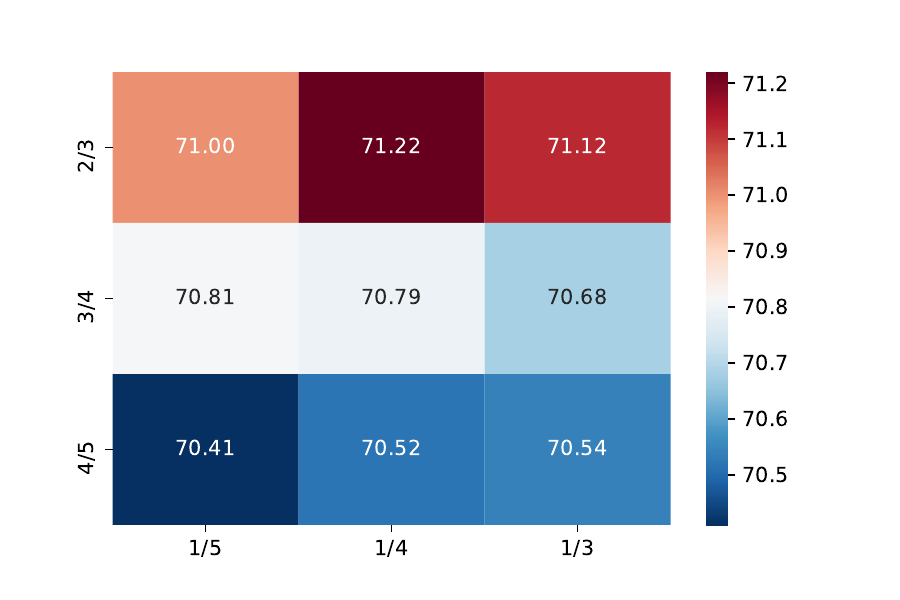}
        \caption{Acc@1, VGG11 $\rightarrow$ Res18.}
        \label{fig:my_label4}
    \end{subfigure}
    \begin{subfigure}[b]{0.31 \textwidth}
        \centering
        \includegraphics[width=\textwidth]{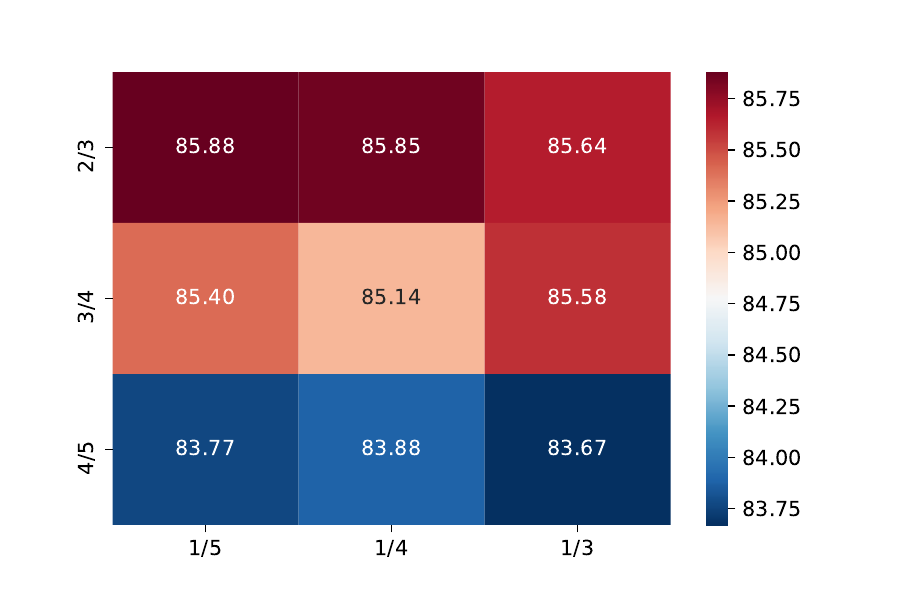}
        \caption{Agree@1, VGG11 $\rightarrow$ Res18.}
        \label{fig:my_label5}
    \end{subfigure}
    \begin{subfigure}[b]{0.31 \textwidth}
        \centering
        \includegraphics[width=\textwidth]{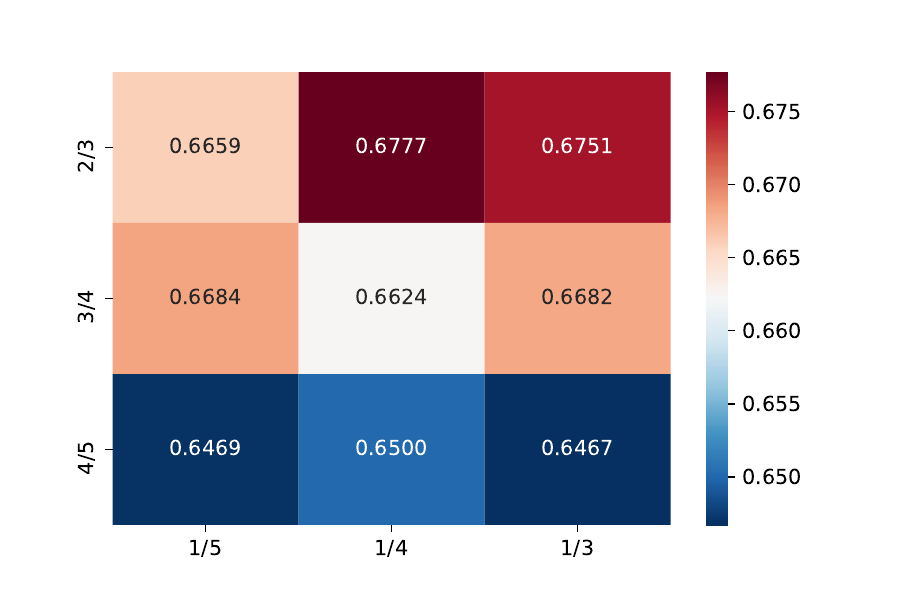}
        \caption{$L_p$, VGG11 $\rightarrow$ Res18.}
        \label{fig:my_label6}
    \end{subfigure}
    \\
    \begin{subfigure}[b]{0.31 \textwidth}
        \centering
        \includegraphics[width=\textwidth]{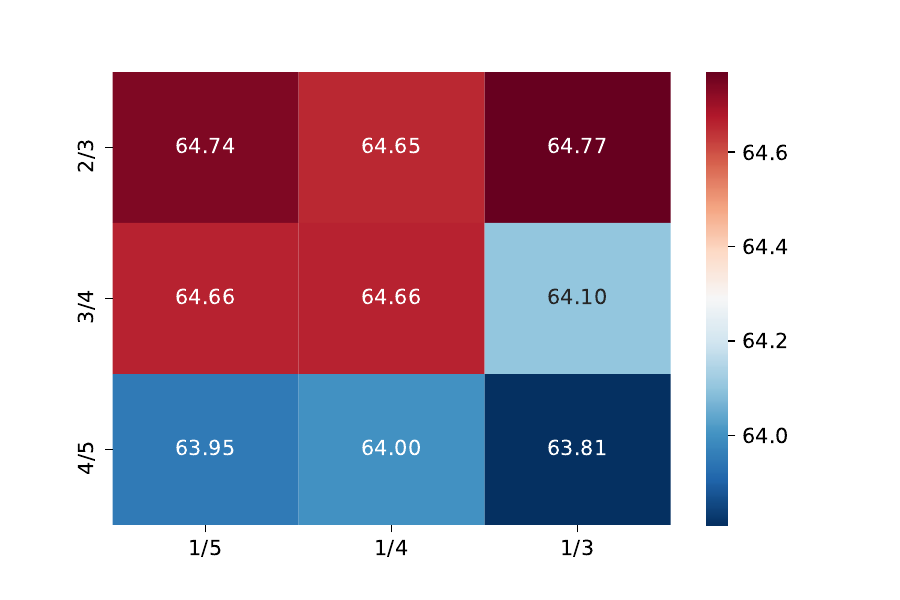}
        \caption{Acc@1, WRN-40-2 $\rightarrow$ WRN-16-2.}
        \label{fig:my_label7}
    \end{subfigure}
    \begin{subfigure}[b]{0.31 \textwidth}
        \centering
        \includegraphics[width=\textwidth]{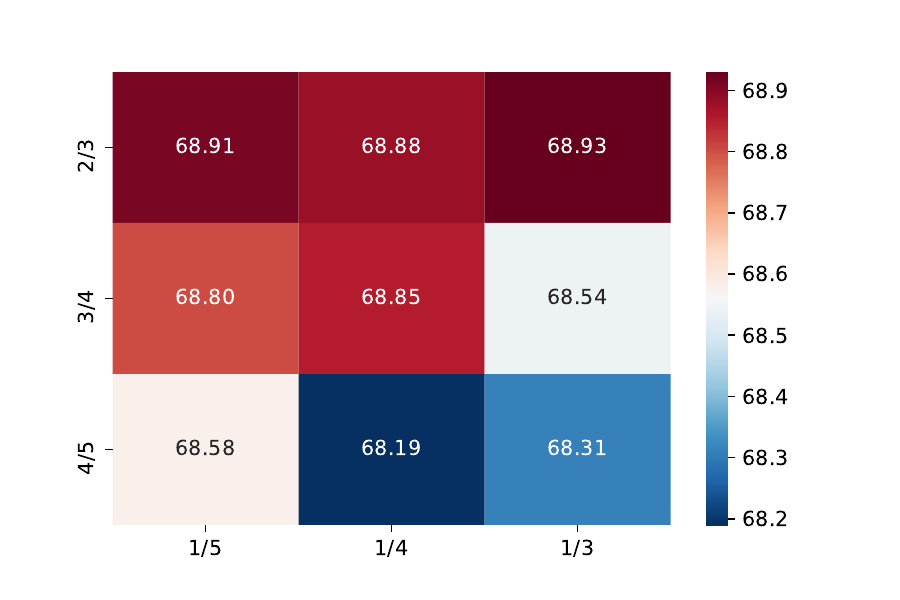}
        \caption{Agree@1, WRN-40-2 $\rightarrow$ WRN-16-2.}
        \label{fig:my_label8}
    \end{subfigure}
    \begin{subfigure}[b]{0.31 \textwidth}
        \centering
        \includegraphics[width=\textwidth]{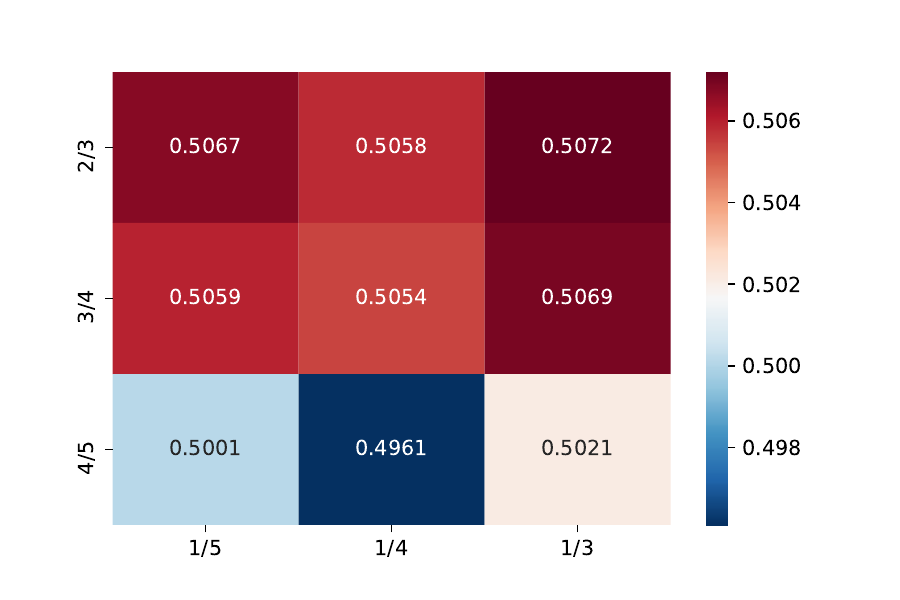}
        \caption{$L_p$, WRN-40-2 $\rightarrow$ WRN-16-2.}
        \label{fig:my_label9}
    \end{subfigure}
    \caption{The effect of $k_{begin}$(x-axis) and $k_{end}$(y-axis) on the performance of DFKD. All the experiments are performed on CIFAR100. The warmer the color is, the higher the metric is and thus the better performance is. Better viewed in color.}
    \label{fig: grad_adv}
        
    \end{figure*}


\textbf{Effect of dynamic strategy of generation stage, $\alpha_{adv}$.} The preceding section deals with the training stage, which involves optimizing $\mathbf{\phi}_s$. To investigate the hyperparameter sensitivity in the generation stage, we focus on the hyperparameter $\alpha_{adv}$, which regulates the gradient for the divergence to the weight of generator $\theta$. $\alpha_{adv}$ plays a crucial role in dynamically adjusting the difficulty level during the generation stage, as outlined in Section \ref{sec: method}. Our experiments are conducted on the CIFAR100 benchmark using VGG11-ResNet18, ResNet34-ResNet18, and WRN-40-2-WRN-40-1 pairs. We set the batch size to 768 to obtain better results and employed a simple scheduler for the experiments.

\begin{equation} \label{equ: grad_adv}
    \alpha_{adv}(\tau)=\left\{\begin{aligned}
0  &, \quad  \tau \leq k_{begin} N \\
\alpha \cdot \tau &, \quad  k_{begin} N < \tau \leq  k_{end} N \\
\lambda_{adv, final} &, \quad  \tau >  k_{end} N,
\end{aligned}\right.
\end{equation}

In previous work \cite{binici2022robust}, epoch 0 to $k_{begin} N$ are referred to as the warm-up stage. Practically, we set $k_{end} < 1$ to avoid extra noise to the pseudo samples. We search for hyperparameters by adjusting $k_{begin} \in \{1/5, 1/4, 1/3\}$ and $k_{end} \in \{2/3, 3/4, 4/5\}$, where $N$ is the total number of training epochs. The results are presented in Figure \ref{fig: grad_adv}. The figure shows that $k_{begin}$ is more sensitive than $k_{end}$, and all the best performances are obtained when $k_{begin} = 1/5$. In lighter networks like the wider ResNet series, the change in $Acc@1$ is more pronounced than in $Agree@1$ and $L_p$. It suggests that the training epochs can be treated as warm-up steps in previous DFKD work \cite{choi2020data,binici2022robust}, and it provides a prior for tuning the gradient. To test the effect of $\alpha_{adv}(\tau)$, we evaluated CuDFKD with two extreme cases: $\alpha_{adv}(\tau) = 0$ and $\alpha_{adv}(\tau) = \infty$. The results are presented in Table \ref{table: adv}. We observe that when setting the generation target as the decision boundary, the performance can be greatly improved (line 2). The dynamic strategy also leads to significant improvement in the student model's performance (line 3).

\begin{figure}[!htbp]
    \centering
    \begin{subfigure}[b]{0.25\textwidth}
        \centering
        \includegraphics[width=\textwidth]{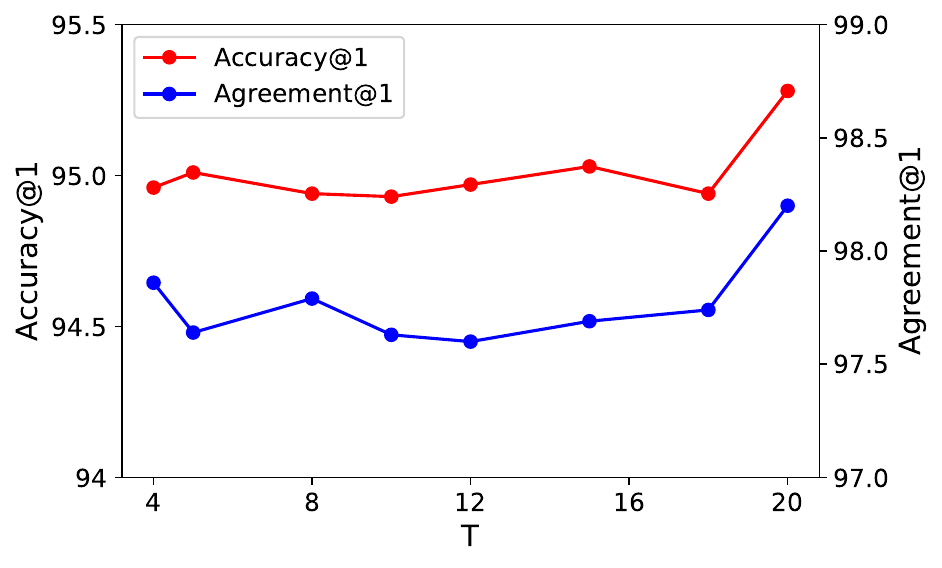}
        \caption{$Acc@1$ and $Agree@1$ with different $T$s.}
        \label{fig:strategy_metrica}
    \end{subfigure}
    \hfill
    \begin{subfigure}[b]{0.225\textwidth}
        \centering
        \includegraphics[width=\textwidth]{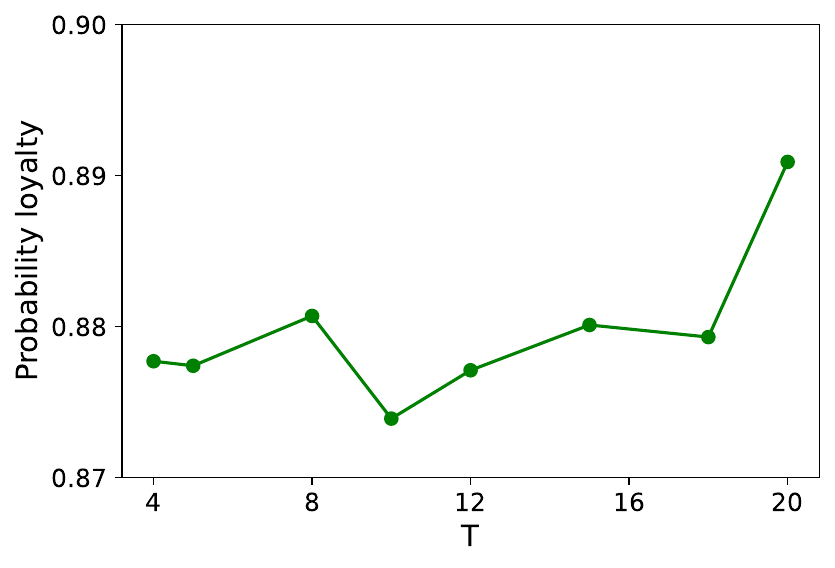}
        \caption{$L_p$ with different settings of $T$.}
        \label{fig:strategy_metricb}
    \end{subfigure}
       \caption{The performance of CuDFKD with different settings of temperature $T$. Teacher: ResNet34, Student: ResNet18, Benchmark: CIFAR10. Better viewed in color.}
       \label{fig:T}
\end{figure}

\textbf{Gradient of adversarial item $\alpha$}. The gradient parameter $\alpha$ in Equ. \ref{equ: grad_adv} plays a important role in controlling the variation of difficulty during generation. It significantly affects the difficulty scheduler in CuDFKD. To explore its sensitivity, we conduct experiments on CIFAR100 with $\alpha$ values in the range of [0.05, 0.4] using a step size of 0.05. The results are presented in Figure \ref{fig:alpha}. As $\alpha$ increases, all three metrics follow a similar pattern of initially increasing and decreasing later. When $\alpha$ is too small, the change in difficulty setting of CuDFKD is insufficient to produce a notable effect, resulting in a performance close to that of static DFKD. Conversely, when $\alpha$ is too large, the generated samples tend to focus on samples near the decision boundary in the later stages of training, making it more challenging to capture image information for classification.

\textbf{Temperature $T$ for KD.} Previous KD frameworks used the temperature $T$ to smooth the output distribution of the model and facilitate alignment between teacher and student models across the distribution. However, we observed that different model architectures require different $T$s for optimal results. To investigate this further, we conducted several comparison experiments by varying the temperature $T$ and comparing the resulting performance across three metrics. The results are presented in Figure \ref{fig:T}. Our findings indicate that for ResNet and VGG architectures, larger temperature values than those commonly used in data-driven KD (i.e., $T=4$\cite{hinton2015distilling}) yield better results. Additionally, heavier models require smoother output distributions to achieve optimal KD performance.

\section{Conclusion}\label{sec: future}
This paper proposes a novel dynamic Data-Free Knowledge Distillation (DFKD). Specifically, the dynamic module is realized by the thought of self-paced learning(SPL). In contrast to prior DFKD methods, CuDFKD utilizes dynamic difficulty setting, with the timestamp of the training process as target information for generating pseudo samples. By applying SPL, we dynamically adjust the difficulty of the generated samples, which improves the balance between the data-prior and decision boundary and makes the DFKD model more adaptive to the student model. We also analyze the properties of CuDFKD and demonstrate its ability to converge all parameters due to MM theory.

In our experiment, we compare the $Acc@1$, $Agree@1$ and $L_p$ of CuDFKD with several memory-based methods ,and present that CuDFKD achieves comparable results. Moreover, CuDFKD's convergence is fast and stable, and it performs the best across a wide range of hyperparameter settings. Overall, we offer a brand new perspective on how to perform DFKD dynamically and highlight the types of knowledge that can improve the training of KD. CuDFKD is a promising approach that provides an effective solution to the challenges posed by previous DFKD methods and has the potential to be applied in various settings.

\section{Limitation and Future work}

Despite its comparable results, CuDFKD has some limitations that leaves for future work. Firstly, the scheduler $\alpha_{adv}(\tau)$ design is currently manual, as we choose the best validation accuracy among all experiments. We plan to develop more automatic strategies to optimize this parameter in the future. Secondly, the curriculum setting of matching $\mathbf{v}^*(\lambda, \mathbf{L})$ and $d(p_{\theta}(x), \tau)$ can be made more flexible to achieve better performance. Additionally, we aim to explore the integration of sequential deep generative models with CuDFKD to control the difficulty of generated samples at different timestamps. These avenues of research will improve the applicability and effectiveness of CuDFKD.

\section{Acknowledgement}
This work is supported by Alibaba-Zhejiang University Joint Institute of Frontier Technologies, The National Key R\&D Program of China (No. 2021YFB2701100), the National Natural Science Foundation of China (No. 61972349,  62106221)  and the Fundamental Research Funds for the Central Universities(No. 226-2022-00064).

\printcredits

\bibliographystyle{cas-model2-names}

\bibliography{cas-refs}

\section{Appendix}

\subsection{Implementation Details}

In our details for implementation, we design a generator to represent the deep generative model, i.e., $x = G(z)$. The detailed hyperparameter design is in Table \ref{table: hyper}. We use Pytorch and code forked by CMI\cite{fang2021contrastive}. Code can be found in the GitHub link in section \ref{sec: exp}. Noticing that the hyperparameter selection in Table \ref{table: hyper} is the default set of hyperparameters in CuDFKD, we change one of them to some specific value when performing hyperparameter searching in section \ref{subsec: hyper}. 

In this paper's table values, we test different sets of nearby hyperparameters in Table \ref{table: hyper} and report the best value in the table. 

\begin{table}[!h]
    \centering
    \caption{Hyperparameter setting in the experiment.}
    \label{table: hyper}
    \begin{tabular}{cccc}
    \hline
    DataSet                & CIFAR10 & CIFAR100 & Tiny ImageNet \\ \hline
    Batch Size             & 768     & 768      & 512           \\
    Dim of $z$             & 512     & 512      & 512           \\
    $lr_G$                 & 0.001   & 0.001    & 0.001         \\
    $lr_s$                 & 0.1     & 0.1      & 0.1           \\
    $\alpha_{bn}$          & 1       & 1        & 1             \\
    $\alpha_{oh}$          & 20      & 20       & 20            \\
    $k_{begin}$            & 0.25    & 0.25     & 0.33          \\
    $k_{end}$              & 0.75    & 0.75     & 0.67          \\
    $\alpha$               & 1e-4    & 2e-5     & 2e-5          \\
    $\frac{\partial \lambda(\tau)}{\partial \tau}$ & 0.5     & 0.2      & 0.2           \\
    $\lambda_0$            & 2.0     & 1.0      & 1.0           \\
    loss type              & kl      & $l_1$       & $l_1$            \\
    $T$                    & 20      & 5        & 5             \\ \hline
    \end{tabular}
    \end{table}

\textbf{Implementation on CIFARs.} For the experiments of CIFARs, we design a generator motivated by DCGAN and other work by 2 upsampling layers. The structure is: Linear - Upsampling2 - Conv3 - BN - LeakyReLU - Upsampling2 - Conv3 - BN - LeakyReLU - Conv3 - Tanh.

\textbf{Implementation on Tiny ImageNet.} For the experiments of CIFARs, we design a generator motivated by DCGAN and other work by three upsampling layers. The structure is: Linear - Upsampling2 - Conv3 - BN - LeakyReLU - Upsampling2 - Conv3 - BN - LeakyReLU - Conv3 - BN - LeakyReLU - Upsampling2- Conv3 - Tanh. 

In the experiments, we find that the number of upsampling layers $L$ greatly affects the final performance of DFKD. When $L$ is too large, the interpolation operation can greatly affect the generation quality. On the other hand, if $L$ is too small, the hidden units for the linear projection will be large and thus lose a lot of information. In CuDFKD, we set $L=2,3$ for a better generation of pseudo samples.

For the implementation of previous DFKD methods, we use the single generator in the setting of DFQ and CMI, without any meta-data setting described in \cite{luo2020large}.


\end{document}